\newtheorem{theorem}{Theorem}[section]
\newtheorem{lemma}[theorem]{Lemma}
\theoremstyle{definition}
\theoremstyle{remark}
\newtheorem{remark}[theorem]{Remark}
\numberwithin{equation}{section}
\def\RR{\mathbb{R}}
\def\CC{\mathbb{C}}
\def\sinc{\mbox{sinc}}
\newcommand{\abs}[1]{\left\lvert #1 \right\rvert}
\newcommand{\norm}[1]{\left\lVert#1\right\rVert}
\title{Lipschitz Properties for Deep Convolutional Networks}
\author{Radu Balan\footnote{Department of Mathematics and Center for Scientific Computation and Mathematical Modeling, University of Maryland, College Park, MD 20742 , rvbalan@math.umd.edu} , Maneesh Singh \footnote{Image and Video Analytics, Verisk Analytics, 545 Washington Boulevard, Jersey City, NJ 07310,
 Maneesh.Singh@verisk.com} , Dongmian Zou \footnote{Department of Mathematics and Center for Scientific Computation and Mathematical Modeling, University of Maryland, College Park, MD 20742, zou@math.umd.edu} }
\date{\today}
\begin{document}

\maketitle

\begin{abstract}
In this paper we discuss the stability properties of convolutional neural networks. Convolutional neural networks are widely used in machine learning.  In classification they are mainly used as feature extractors. Ideally, we expect similar features when the inputs are from the same class. That is, we hope to see a small change in the feature vector with respect to a deformation on the input signal. This can be established mathematically, and the key step is to derive the Lipschitz properties. Further, we establish that the stability results can be extended for more general networks. We give a formula for computing the Lipschitz bound, and compare it with other methods to show it is closer to the optimal value.
\end{abstract}





\section{Introduction}


Recently convolutional neural networks have enjoyed tremendous success in many applications in image and signal processing. According to \cite{Lecun15}, a general convolutional network contains three types of layers: convolution layers, detection layers, and pooling layers. In \cite{Mallat12}, Mallat proposes the scattering network, which is a tree-structured convolutional neural network whose filters in convolution layers are wavelets. Mallat proves that the scattering network satisfies two important properties: (approximately) invariance to translation and stabitity to deformation. However, for those properties to hold, the wavelets must satisfy an admissibility condition. This restricts the adaptability of the theory. The authors in \cite{WB15,WB16} use a slightly different setting to relax the conditions. They consider sets of filters that form semi-discrete frames of upper frame bound equal to one. They prove that deformation stability holds for signals that satisfy certain conditions.

In both settings, the deformation stability is a consequence of the Lipschitz property of the network, or feature extractor. The Lipschitz property in itself is important even if we do not consider deformation of the form described in \cite{Mallat12}. In \cite{Szegedy13}, the authors detect some instability of the AlexNet by generating images that are easily recognizable by nude eyes but cause the network to give incorrect classification results. They partially attribute the instability to the large Lipschitz bound of the AlexNet. It is thus desired to have a formula to compute the Lipschitz bound in case the upper frame bound is not one.

The lower bound in the frame condition is not used when we analyze the stability properties for scattering networks. In \cite{WB16} the authors conjectured that it has to do with the distinguishability of the two classes for classification. However, certain loss of information should be allowed for classification tasks. A lower frame bound is too strong in this case since it has most to do with injectivity. In this paper, we only consider the semi-discrete Bessel sequence, and discuss a convolutional network of finite depth.

Merging is widely used in convolutional networks. Note that practitioners use a concatenation layer (\cite{Szegedy15}) but that is just a concatenation of vectors and is of no mathematical interest. Nevertheless, aggregation by $p$-norms and multiplication is frequently used in networks and we 
still obtain stability to deformation in those cases and the Lipschitz bound increases only by a factor depending on the number of filters to be aggregated.

The organization of this paper is as follows. In Section 2, we introduce the scattering network and state a general Lipschitz property. In Section 3, we discuss the aggregation of filters using $p$-norms or pointwise multiplication. In Section 4, we use examples of networks to compare different methods for computing the Lipschitz constants.

\section{Scattering Network}



\begin{figure}[ht!]
 \centering
 \includegraphics[width=\textwidth]{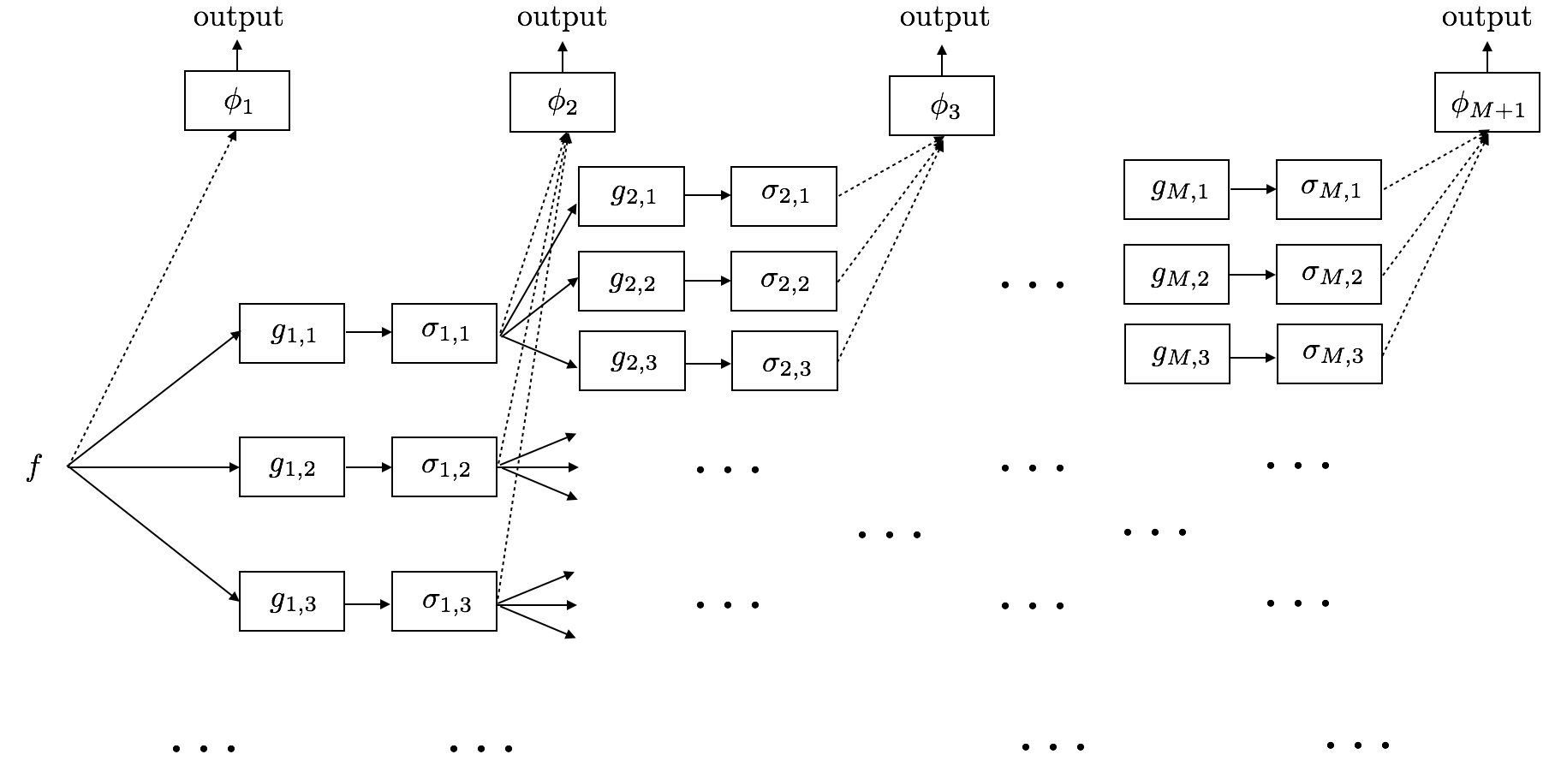}
 \caption{Structure of the scattering network of depth $M$}
 \label{fig:struct1}
\end{figure}


We first review the theory developed by the authors in \cite{Mallat12, WB15, WB16} and give a more general result. Figure \ref{fig:struct1} shows a typical scattering network. $f$ denotes an input signal (commonly in $L^2$ or $l^2$, for our discussion we take $f \in L^2(\mathbb{R}^d)$). $g_{m,l}$'s and $\phi_m$'s are filters and the corresponding blocks symbolizes the operation of doing convolution with the filter in the block. The blocks marked $\sigma_{m,l}$ illustrate the action of a nonlinear function. This structure clearly shows the three stages of a convolutional neural network: the $g_{m,l}$'s are the convolution stage; the $\sigma_{m,l}$'s are the detection stage; the $\phi_m$'s are the pooling stage.

The output of the network in Figure \ref{fig:struct1} is the collection of outputs of each layer. To represent the result clearly, we introduce some notations first. 

We call an ordered collection of filters $g_{m,l} \in L^1(\mathbb{R}^d)$ connected in the network starting from $m=1$ a path, say $q = ( g_{1,l_1}, g_{2,l_2}, \cdots, g_{M_q,l_{M_q}} )$ (for brevity we also denote it as $q = ((1,l_1),(2,l_2), \cdots, (M_q,l_{M_q}))$, and in this case we denote $\abs{q} = M_q$ to be the number of filters in the collection. We call $\abs{q}$ the length of the path. For $q = \emptyset$, we say that $\abs{q} = 0$. The largest possible $\abs{q}$, say $M$, is called the depth of the network. For each $m = 1,2,\cdots,M+1$, there is an output-generating atom $\phi_m \in L^1(\mathbb{R}^d)$, which is usually taken to be a low-pass filter. $\phi_1$ generates an output from the original signal $f$, and $\phi_m$ generates an output from a filter $g_{m-1,l_m}$ in the ($m-1$)'s layer, for $2 \leq m \leq M+1$. It is clear that a scattering network of finite depth is uniquely determined by $\phi_m$'s and the collection $Q$ of all paths. We use $G_m$ to denote the set of filters in the $m$-th layer. For a fixed $q$ with $\abs{q} = m$, we use $G_{m+1}^q$ to denote the set of filters in the ($m+1$)'s layer that are connected with $q$. Thus $G_{m+1}$ is a disjoint union of $G_{m+1}^q$'s:
\begin{equation*}
G_{m+1} = \mathop{\dot{\bigcup}} G_{m+1}^q ~.
\end{equation*}

$\sigma_{m,l}: \CC \rightarrow \CC$ are Lipschitz continuous functions with Lipschitz bound no greater than $1$. That is,
\begin{equation*}
\norm{\sigma_{m,l}(y)-\sigma_{m,l}(\tilde{y})}_2 \leq \norm{y-\tilde{y}}_2
\end{equation*}
for any $y$, $\tilde{y} \in L^2(\RR^d)$. The Lip-$1$ condition is not restrictive since any other Lipschitz constant can be absorbed by the proceeding $g_{m,l}$ filters. 

The scattering propagator $U[q]: L^2(\mathbb{R}^d) \rightarrow L^2(\mathbb{R}^d)$ for a path $q = ( g_{1,l_1}, g_{2,l_2},$ $\cdots, g_{M_q,l_{M_q}} )$ is defined to be
\begin{equation}
\label{eq:besseldef}
U[q]f := \sigma_{M_q,l_{M_q}} \Bigg( \sigma_{2,l_2} \Big( \sigma_{1,l_1}(f \ast g_{1,l_1}) \ast g_{2,l_2} \Big) \ast \cdots  \ast g_{M_q,l_{M_q}} \Bigg) ~.
\end{equation}
If $q = \emptyset$, then by convention we say $U[\emptyset]f := f$.

Given an input $f \in L^2(\mathbb{R}^d)$, the output of the network is the collections $\Phi(f) := \{U[q]f \ast \phi_{M_q+1} \}_{q \in Q}$. The norm $||| \cdot |||$ is defined by
\begin{equation}
\label{eq:framedef}
||| \Phi(f) ||| := \left( \sum_{q \in Q} \norm{ U[q]f \ast \phi_{M_q+1} }_2^2 \right) ^{\frac{1}{2}} ~.
\end{equation}

Given a collection of filters $\{ g_i \}_{i \in \mathcal{I}}$ where the index set $\mathcal{I}$ is at most countable and for each $i$, $g_i \in L^1(\mathbb{R}^d) \cap L^2(\mathbb{R}^d)$,
$\{g_i\}_{i \in \mathcal{I}}$ is said to form the atoms of a semi-discrete Bessel sequence if there exists a constant $B > 0$ for which
\begin{equation*}
\sum_{i \in \mathcal{I}} \norm{f \ast g_i}_2^2 \leq B \norm{f}_2^2
\end{equation*}
for any $f \in L^2$. In this case, $\{ g_i \}_{i \in \mathcal{I}}$ is said to form the atoms of a semi-discrete frame if in addition there exists a constant $A > 0$ for which 
\begin{equation*}
A \norm{f}_2^2 \leq \sum_{i \in \mathcal{I}} \norm{f \ast g_i}_2^2 \leq B \norm{f}_2^2
\end{equation*}
for any $f \in L^2$.

Conditions (\ref{eq:besseldef}) and (\ref{eq:defBessel}) can be achieved for a larger class of filters. Specifically, we shall introduce a Banach algebra in (\ref{def:banachalgebra}), where the Bessel bound is naturally defined.

Throughout this paper, we adapt the definition of Fourier transform of a function $f$ to be
\begin{equation}
\hat{f}(\omega) = \int_{\RR^d} f(x) e^{-2 \pi i \omega x} dx ~.
\end{equation}
The dilation of $f$ by a factor $\lambda$ is defined by
\begin{equation}
f_{\lambda} (x) = \lambda f(\lambda x) ~.
\end{equation}

The first result of this paper compiles and extends previous results obtained in \cite{Mallat12,WB15,WB16}.

\begin{theorem}[See also \cite{Mallat12,WB15,WB16}]
\label{prop:lip}
Suppose we have a scattering network of depth $M$. For each $m = 1, 2, \cdots, M+1$,
\begin{equation}\label{eq:defBessel}
B_m = \max_{q: \abs{q} = m-1} \norm{ \Big( \sum_{g_{m,l} \in G_m^q} \abs{\hat{g}_{m,l}}^2 \Big) + \abs{\hat{\phi}_m}^2 }_{\infty} < \infty
\end{equation}
with the understanding that
$B_{M+1} = \norm{\hat{\phi}_{M+1}}_{\infty}^2$ (that is, $G_{M+1}^q = \emptyset$).
Then the corresponding feature extractor $\Phi$
is Lipschitz continuous in the following manner:
\begin{equation*}
|||\Phi(f)-\Phi(h)||| \leq \left( \prod_{m=1}^{M+1} {\tilde{B}}_m \right) ^{\frac{1}{2}} \norm{f-h}_2 ~,
\end{equation*}
where
\begin{equation}\label{def:Bm}
\tilde{B}_1 = B_1, \quad \tilde{B}_m = \max\{1,B_m\} ~~\mbox{for}~ m \geq 2 ~.
\end{equation}
\end{theorem}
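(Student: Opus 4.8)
The plan is to reduce the claim to a single layer-by-layer energy recursion and then harvest the outputs by a \emph{backward} induction that is tailored to the $\max\{1,\cdot\}$ appearing in the definition of $\tilde{B}_m$. First I would unfold (\ref{eq:framedef}) and organize the sum over paths by length. Writing $Q_m := \{q \in Q : \abs{q} = m\}$ for the nodes at level $m$ and $d_q := U[q]f - U[q]h$, the target quantity is $|||\Phi(f)-\Phi(h)|||^2 = \sum_{m=0}^M O_m$, where I set $O_m := \sum_{q \in Q_m} \norm{d_q \ast \phi_{m+1}}_2^2$ (the output energy produced at level $m$) and $\Delta_m := \sum_{q \in Q_m} \norm{d_q}_2^2$ (the difference energy propagated into level $m$). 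Since $U[\emptyset]f = f$, the initial datum is $\Delta_0 = \norm{f-h}_2^2$, and the goal becomes $\sum_{m=0}^M O_m \leq \left( \prod_{m=1}^{M+1} \tilde{B}_m \right) \Delta_0$.

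Next I would establish the recursion $\Delta_{m+1} + O_m \leq B_{m+1}\Delta_m$ from two ingredients. The first is a per-node Bessel estimate coming directly from (\ref{eq:defBessel}): by Plancherel, for a node $q$ with $\abs{q}=m$ and any $u \in L^2(\RR^d)$, one has $\sum_{g_{m+1,l} \in G_{m+1}^q} \norm{u \ast g_{m+1,l}}_2^2 + \norm{u \ast \phi_{m+1}}_2^2 = \int \abs{\hat{u}(\omega)}^2 \big[ \sum_l \abs{\hat{g}_{m+1,l}(\omega)}^2 + \abs{\hat{\phi}_{m+1}(\omega)}^2 \big]\, d\omega \leq B_{m+1} \norm{u}_2^2$. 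The second is the Lip-$1$ property of the detectors: for a child $c = (q,(m+1,l))$ we have $U[c]f = \sigma_{m+1,l}(U[q]f \ast g_{m+1,l})$, so $\norm{d_c}_2 \leq \norm{d_q \ast g_{m+1,l}}_2$. Applying the Bessel estimate with $u = d_q$, keeping the $\phi_{m+1}$ term as the output contribution of $q$, then summing over the children of $q$ and over $q \in Q_m$ (the children of the level-$m$ nodes partition $Q_{m+1}$) yields exactly $\Delta_{m+1} + O_m \leq B_{m+1}\Delta_m$ for each $m = 0,1,\dots,M$, with the convention $\Delta_{M+1}=0$.

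Finally I would run the collection step as a backward induction on the claim $\sum_{k=m}^M O_k \leq \big( \prod_{j=m+1}^{M+1} \tilde{B}_j \big)\, \Delta_m$. The base case $m=M$ is immediate from $O_M \leq B_{M+1}\Delta_M \leq \tilde{B}_{M+1}\Delta_M$. For the inductive step, set $P := \prod_{j=m+2}^{M+1} \tilde{B}_j$; since every factor satisfies $\tilde{B}_j \geq 1$ (here $j \geq 2$), we have $P \geq 1$, whence $\sum_{k=m}^M O_k = O_m + \sum_{k=m+1}^M O_k \leq O_m + P\,\Delta_{m+1} \leq P\,(O_m + \Delta_{m+1}) \leq P\,B_{m+1}\,\Delta_m \leq \tilde{B}_{m+1}\,P\,\Delta_m$, using $B_{m+1} \leq \tilde{B}_{m+1}$ (with equality when $m=0$). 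Specializing to $m=0$ and taking square roots gives the stated bound.

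The delicate point is precisely this last step, and it is where the $\max\{1,\cdot\}$ earns its place. A naive forward propagation of $\Delta_m \leq \big(\prod_{j=1}^m B_j\big)\Delta_0$ only produces the lossy bound $\sum_m O_m \leq \sum_m \big(\prod_{j=1}^{m+1} B_j\big)\Delta_0$, a \emph{sum} of products rather than a single product. The backward induction instead pays out each level's output against the shared budget $B_{m+1}\Delta_m$, and the factoring $O_m + P\,\Delta_{m+1} \leq P\,(O_m+\Delta_{m+1})$ is valid \emph{only} because $P \geq 1$; replacing $\tilde{B}_j$ by $B_j$ would allow $P<1$ when some layer is contractive and break the inequality. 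Thus the whole argument hinges on the nonnegativity of $O_m$ together with the normalization $\tilde{B}_m = \max\{1,B_m\}$, which is exactly what collapses the accumulated output into the single product $\prod_{m=1}^{M+1}\tilde{B}_m$.
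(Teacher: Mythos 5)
Your proposal is correct and matches the paper's argument in all essentials: your recursion $\Delta_{m+1}+O_m\le B_{m+1}\Delta_m$ is exactly Lemma~\ref{lem:nonexpansive} (proved there the same way, via the Lip-$1$ property of the $\sigma_{m,l}$'s followed by the Plancherel/Young estimate encoded in (\ref{eq:defBessel})), and the role you assign to $\tilde B_m=\max\{1,B_m\}$ is the same. The only difference is in the final bookkeeping: the paper harvests the recursion by forward-propagating $\Delta_m\le\big(\prod_{m'=1}^{m}\tilde B_{m'}\big)\Delta_0$ and telescoping $\tilde B_1+\sum_{m=1}^{M}(\tilde B_{m+1}-1)\prod_{m'=1}^{m}\tilde B_{m'}=\prod_{m=1}^{M+1}\tilde B_m$, whereas you run a backward induction; both use $\tilde B_j\ge 1$ at the same point and for the same reason.
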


\begin{proof}
First we prove a lemma.
\begin{lemma}\label{lem:nonexpansive}
With the settings in Theorem \ref{prop:lip}, for $0 \leq m \leq M-1$, we have
\begin{equation}\label{eq:useBessel}
\begin{aligned}
& \sum_{\abs{q} = m+1} \norm{U[q]f-U[q]h}_2^2 + \sum_{\abs{q} = m} \norm{U[q]f \ast \phi_{m+1} - U[q]h \ast \phi_{m+1}}_2^2 \\
& ~\leq~ \sum_{\abs{q} = m} {B}_{m+1} \norm{U[q]f-U[q]h}_2^2 ~;
\end{aligned}
\end{equation}
for $m = M$, we have
\begin{equation}\label{eq:useBessel2}
\sum_{\abs{q} = M} \norm{U[q]f \ast \phi_{m+1} - U[q]h \ast \phi_{m+1}}_2^2 \leq \sum_{\abs{q} = M} {B}_{M+1} \norm{U[q]f-U[q]h}_2^2 ~.
\end{equation}
\end{lemma}
\begin{proof}[Proof of Lemma \ref{lem:nonexpansive}]
Let $q$ be a path with $\abs{q} = m < M$. We go one layer deeper to get
\begin{equation}\label{eq:useBessel1.1}
\begin{aligned}
& \sum_{q' \in q \times G_{m+1}^q} \norm{U[q']f-U[q']h}_2^2 + \norm{U[q]f \ast \phi_{m+1} - U[q]h \ast \phi_{m+1}}_2^2 \\
~=~ & \sum_{g_{m+1,l} \in G_{m+1}^q} \norm{\sigma_{m+1,l} ( U[q]f \ast g_{m+1,l} ) - \sigma_{m+1,l} ( U[q]h \ast g_{m+1,l} ) }_2^2 + \\
& \qquad \norm{U[q]f \ast \phi_{m+1} - U[q]h \ast \phi_{m+1}}_2^2 \\
~\leq~ & \sum_{g_{m+1,l} \in G_{m+1}^q} \norm{U[q]f \ast g_{m+1,l}-U[q]h \ast g_{m+1,l}}_2^2 + \norm{U[q]f \ast \phi_{m+1} - U[q]h \ast \phi_{m+1}}_2^2 \\
~=~ & \sum_{g_{m+1,l} \in G_{m+1}^q} \norm{(U[q]f-U[q]h) \ast g_{m+1,l}}_2^2 + \norm{(U[q]f-U[q]h) \ast \phi_{m+1}}_2^2 ~.
\end{aligned}
\end{equation}
Sum over all $q$ with length $m$, we have
\begin{equation}\label{eq:useBessel1.2}
\begin{aligned}
& \sum_{\abs{q} = m+1} \norm{U[q]f-U[q]h}_2^2 + \sum_{\abs{q} = m} \norm{U[q]f \ast \phi_{m+1} - U[q]h \ast \phi_{m+1}}_2^2 \\
~\leq~ & \sum_{\abs{q} = m} \sum_{g_{m+1,l} \in G_{m+1}} \norm{(U[q]f-U[q]h) \ast g_{m+1,l}}_2^2 + \\
& \qquad \sum_{\abs{q} = m} \norm{U[q]f \ast \phi_{m+1} - U[q]h \ast \phi_{m+1}}_2^2 \\
~\leq~ & \sum_{\abs{q} = m} B_{m+1} \norm{U[q]f-U[q]h}_2^2 ~,
\end{aligned}
\end{equation}
which follows the Bessel inequality by the definition of the $B_m$'s. For m = M, directly following the Young's inequality we have (\ref{eq:useBessel2}).
\end{proof}
We now continue with the proof of Theorem \ref{prop:lip}. The inqualities (\ref{eq:useBessel}) and (\ref{eq:useBessel2}) have two consequences. First, summing over $m = 0, \cdots, M$ we have
\begin{equation}\label{eq:useBessel3}
\begin{aligned}
& \sum_{m = 0}^M \sum_{\abs{q} = m} \norm{U[q]f \ast \phi_{m+1} - U[q]h \ast \phi_{m+1}}_2^2 \\
\leq ~ {B}_1 & \norm{f-h}_2^2 + \sum_{m = 1}^M ({B}_{m+1}-1) \sum_{\abs{q} = m} \norm{U[q]f-U[q]h}_2^2 ~;
\end{aligned}
\end{equation}
second, we have for each $m = 0, \cdots, M-1$ that
\begin{equation}\label{eq:useBessel4}
\sum_{\abs{q} = m+1} \norm{U[q]f-U[q]h}_2^2 \leq \sum_{\abs{q} = m} {B}_{m+1} \norm{U[q]f-U[q]h}_2^2 ~.
\end{equation}
Therefore, put (\ref{eq:useBessel3}) and (\ref{eq:useBessel4}) together, noting that $B_m \leq \tilde{B}_m$ for each m, we have
\begin{equation}\label{eq:useBessel5}
\begin{aligned}
& \sum_{m = 0}^M \sum_{\abs{q} = m} \norm{U[q]f \ast \phi_{m+1} - U[q]h \ast \phi_{m+1}}_2^2 \\
~\leq~ & \tilde{B}_1 \norm{f-h}_2^2 + \sum_{m = 1}^M (\tilde{B}_{m+1}-1) \left( \prod_{m' = 1}^m \tilde{B}_{m'} \right) \norm{f-h}_2^2 \\
~\leq~ & \left( \prod_{m=1}^{M+1} \tilde{B}_m \right) \norm{f-h}_2^2 ~.
\end{aligned}
\end{equation}
We complete the proof by observing that the uppermost object in Inequality (\ref{eq:useBessel5}) is nothing but $|||\Phi(f) - \Phi(h)|||^2$.

\end{proof}

\begin{remark}
\cite{WB15,WB16} consider the case where each filter in the $m$-th layer in connected to all the frame vectors from the pre-designed frame for the ($m+1$)-th layer. In practical uses then, a dimension reduction process needs to be done to select a few branches from the numerous tributaries due to such a design manner (see \cite{BM13}). Also, the authors of \cite{WB15,WB16} assume that all the $B_m$'s are less than or equal to one. As can be seen in the above proof, this assumption is not needed. 
\end{remark}

\begin{remark}
The infinite-depth case is an immediate extension of the finite-depth case if $\prod \tilde{B}_m < \infty$.
\end{remark}

Theorem \ref{prop:lip}, together with Schur's test (for integral operators), lead to the following theorem, which implies the deformation stability of the corresponding network. The proof can be found in \cite{WB15}. We state this result for the completeness of this article.

\begin{theorem}[\cite{WB15}]\label{thm:main}
With the settings in Theorem \ref{prop:lip},
Let $H_R$ be the space of R-band-limited functions defined by
\begin{equation*}
H_R := \{ f \in L^2(\mathbb{R}^d): \mbox{supp}(\hat{f}) \subset B_R(0) \} ~.
\end{equation*}
Then for all $f \in H_R$, $\omega \in C(\mathbb{R}^d,R)$, $\tau \in C^1(\mathbb{R}^d,R)$ with $\norm{D \tau}_{\infty} \leq (2d)^{-1}$,
\begin{equation*} 
|||\Phi(f) - \Phi(F_{\tau, \omega}f)||| \leq C(R\norm{\tau}_{\infty}+\norm{\omega}_{\infty})\norm{f}_2 ~,
\end{equation*}
where $F_{\tau, \omega}f)$ is the deformed version of $f$ defined by
\begin{equation}\label{def:deformation}
F_{\tau, \omega}f(x) := e^{2\pi i \omega(x)} f(x - \tau(x)) ~.
\end{equation}
\end{theorem}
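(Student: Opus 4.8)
The plan is to avoid re-analyzing the feature map altogether and instead combine the Lipschitz estimate of Theorem~\ref{prop:lip} with a direct bound on the size of the deformation. By Theorem~\ref{prop:lip},
\begin{equation*}
|||\Phi(f) - \Phi(F_{\tau,\omega}f)||| \leq \left( \prod_{m=1}^{M+1} \tilde{B}_m \right)^{1/2} \norm{f - F_{\tau,\omega}f}_2 ,
\end{equation*}
so it suffices to prove $\norm{f - F_{\tau,\omega}f}_2 \leq C'(R\norm{\tau}_\infty + \norm{\omega}_\infty)\norm{f}_2$ for every $f \in H_R$ and then absorb $(\prod_m \tilde{B}_m)^{1/2}$ into the final constant $C$.

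First I would separate the warping from the modulation. Writing $T_\tau f(x) = f(x - \tau(x))$ and $M_\omega f(x) = e^{2\pi i\omega(x)}f(x)$, so that $F_{\tau,\omega} = M_\omega T_\tau$, the triangle inequality gives
\begin{equation*}
\norm{f - M_\omega T_\tau f}_2 \leq \norm{f - T_\tau f}_2 + \norm{(1 - e^{2\pi i\omega})T_\tau f}_2 .
\end{equation*}
The modulation term is the easy one: since $\abs{1 - e^{2\pi i\omega(x)}} \leq 2\pi\norm{\omega}_\infty$ pointwise, it is bounded by $2\pi\norm{\omega}_\infty \norm{T_\tau f}_2$, and $\norm{T_\tau f}_2 \leq C''\norm{f}_2$ follows from the change of variables $x \mapsto x - \tau(x)$, whose Jacobian is controlled by the hypothesis $\norm{D\tau}_\infty \leq (2d)^{-1}$. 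This produces the $\norm{\omega}_\infty$ part of the bound without using band-limitedness.

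The warping term is where $f \in H_R$ enters. Choosing a Schwartz bump $\phi$ with $\hat\phi \equiv 1$ on $B_1(0)$ and setting $\phi_R(x) = R^d\phi(Rx)$ gives $f = f \ast \phi_R$, so
\begin{equation*}
(f - T_\tau f)(x) = \int_{\RR^d} f(y)\big[\phi_R(x-y) - \phi_R(x - \tau(x) - y)\big]\,dy .
\end{equation*}
Thus $f - T_\tau f = Kf$ for the integral operator $K$ with kernel $k(x,y) = \phi_R(x-y) - \phi_R(x-\tau(x)-y)$, and I would bound $\norm{K}_{2\to 2}$ by Schur's test, i.e.\ by $\big(\sup_x \int\abs{k(x,y)}\,dy\big)^{1/2}\big(\sup_y \int\abs{k(x,y)}\,dx\big)^{1/2}$. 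The first factor is straightforward: a fundamental-theorem-of-calculus expansion of $\phi_R(z) - \phi_R(z - \tau(x))$ together with translation invariance of $\norm{\nabla\phi_R}_1$ yields $\int\abs{k(x,y)}\,dy \leq \norm{\tau}_\infty\norm{\nabla\phi_R}_1 = R\norm{\tau}_\infty\norm{\nabla\phi}_1$.

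The hard part will be the second Schur factor $\sup_y\int\abs{k(x,y)}\,dx$, because there the shift $\tau(x)$ varies with the integration variable $x$, so the two terms of $k$ can no longer be compared by a single translation. This is precisely where $\norm{D\tau}_\infty \leq (2d)^{-1}$ is indispensable: it guarantees that $x \mapsto x - \tau(x)$ is a bi-Lipschitz diffeomorphism with $\abs{\det(I - D\tau(x))}$ bounded away from zero, so a change of variables in the second term keeps its mass comparable to the first and the same gradient estimate applies after re-parametrization, giving $\sup_y\int\abs{k(x,y)}\,dx \leq C'''R\norm{\tau}_\infty$. Combining the two factors yields $\norm{f - T_\tau f}_2 \leq CR\norm{\tau}_\infty\norm{f}_2$; adding the modulation estimate and multiplying by $(\prod_m\tilde{B}_m)^{1/2}$ then completes the proof. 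The delicate bookkeeping in this change of variables, rather than any conceptual difficulty, is what makes the second Schur bound the technically demanding step, and it is exactly where the Schur-test reference preceding the statement does its work.
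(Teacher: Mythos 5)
Your proposal is correct and follows exactly the route the paper indicates: the paper does not spell out a proof (it defers to \cite{WB15}) but states that the result follows from Theorem \ref{prop:lip} together with Schur's test, which is precisely your reduction to bounding $\norm{f - F_{\tau,\omega}f}_2$ and your Schur-test estimate of the warping operator on band-limited functions. The sketch is sound, including the use of $\norm{D\tau}_\infty \leq (2d)^{-1}$ to make $x \mapsto x - t\tau(x)$ a bi-Lipschitz change of variables in the second Schur factor.
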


Note that the Lipschitz property of $\sigma_{m,l}$'s is not necessary in some cases. For instance, if we use $\abs{\cdot}^2$ in place of all the $\sigma_{m,l}$'s, as illustrated in Figure \ref{fig:struct1_2}. Then the training process would deal with smooth functions that are not Lipschitz. To guarantee a finite Lipschitz constant for $\Phi$ we need to control the $L^{\infty}$ norm of the input. 

\begin{figure}[ht!]
 \centering
 \includegraphics[width=\textwidth]{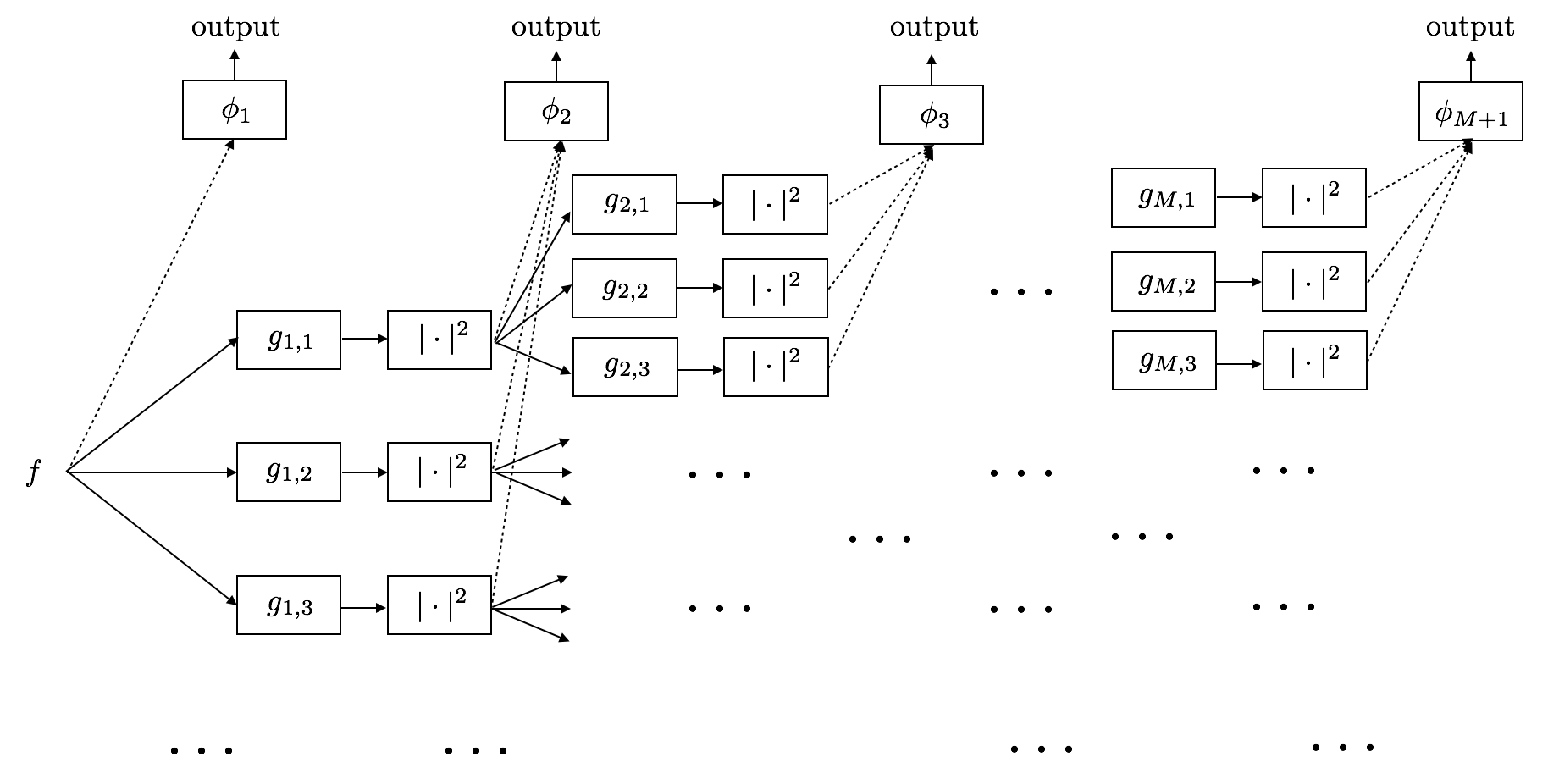}
 \caption{Structure of the scattering network of depth $M$ with nonlinearity $\abs{\cdot}^2$}
 \label{fig:struct1_2}
\end{figure}

\begin{theorem}\label{prop:lipSquare}
Consider the settings in Theorem \ref{prop:lip}, where $\sigma_{m,l}$'s are replaced with $\abs{\cdot}^2$ (see Figure \ref{fig:struct1_2}).
Suppose there is a constant $R>0$ for which $\norm{g_{m,l}}_1 \leq \frac{ \min \{ 1,2\sqrt{R} \} }{2R}$ for all $m$, $l$.
Then the corresponding feature extractor $\Phi$ 
is Lipschitz continuous on the ball of radius $R$ under infinity norm in the following manner:
\begin{equation*}
|||\Phi(f)-\Phi(h)||| \leq \left( \prod_{m=1}^{M+1} \tilde{B}_m \right) ^{\frac{1}{2}} \norm{f-h}_2 ~.
\end{equation*}
for any $f$, $h \in L^2(\mathbb{R}^d)$ with $\norm{f}_{\infty} \leq R$, $\norm{h}_{\infty} \leq R$, where $\tilde{B}_m$'s are defined as in (\ref{def:Bm}) and (\ref{eq:defBessel}). 
\end{theorem}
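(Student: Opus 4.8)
The plan is to imitate the proof of Theorem~\ref{prop:lip} essentially line for line, the only new difficulty being that $\abs{\cdot}^2$ fails to be globally Lipschitz. Indeed, the entire argument of Lemma~\ref{lem:nonexpansive} and Theorem~\ref{prop:lip} goes through unchanged once the single inequality
\begin{equation*}
\norm{\abs{U[q]f \ast g_{m+1,l}}^2 - \abs{U[q]h \ast g_{m+1,l}}^2}_2 \leq \norm{(U[q]f - U[q]h) \ast g_{m+1,l}}_2
\end{equation*}
is established, since this is exactly the place in the chain \eqref{eq:useBessel1.1} where the Lip-$1$ property of $\sigma_{m+1,l}$ was invoked. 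So the proof reduces to showing that $\abs{\cdot}^2$ is non-expansive on the domain reached by the propagating signals, and for that I must control their $L^\infty$ norms.

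First I would prove an $L^\infty$ propagation bound: for every path $q$ and every $f$ with $\norm{f}_\infty \leq R$ one has $\norm{U[q]f}_\infty \leq R$. This is an induction on $\abs{q}$, with base case $U[\emptyset]f = f$. For the inductive step, $U[q']f = \abs{U[q]f \ast g_{m+1,l}}^2$ with $\abs{q} = m$, so Young's inequality gives $\norm{U[q]f \ast g_{m+1,l}}_\infty \leq \norm{U[q]f}_\infty \norm{g_{m+1,l}}_1 \leq R \cdot \frac{1}{\sqrt R} = \sqrt R$, whence $\norm{U[q']f}_\infty \leq (\sqrt R)^2 = R$. Here I use the branch $\norm{g_{m,l}}_1 \leq \frac{1}{\sqrt R}$ of the hypothesis, which is available because $\frac{\min\{1,2\sqrt R\}}{2R} \leq \frac{1}{\sqrt R}$. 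The same estimate keeps every intermediate signal in $L^\infty \cap L^2$, so that $\abs{\cdot}^2$ genuinely maps $L^2$ to $L^2$ along the network.

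Next I would establish the local non-expansiveness of $\abs{\cdot}^2$. Writing $a = U[q]f \ast g_{m+1,l}$ and $b = U[q]h \ast g_{m+1,l}$, the elementary pointwise estimate $\big| \abs{a}^2 - \abs{b}^2 \big| \leq (\abs{a} + \abs{b})\abs{a - b}$ yields
\begin{equation*}
\norm{\abs{a}^2 - \abs{b}^2}_2 \leq \norm{\abs{a} + \abs{b}}_\infty \norm{a - b}_2 \leq \big( \norm{a}_\infty + \norm{b}_\infty \big) \norm{a - b}_2 .
\end{equation*}
By Young's inequality together with the previous step, $\norm{a}_\infty, \norm{b}_\infty \leq R \norm{g_{m+1,l}}_1 \leq R \cdot \frac{1}{2R} = \frac12$, now using the branch $\norm{g_{m,l}}_1 \leq \frac{1}{2R}$ of the hypothesis (again available since $\frac{\min\{1,2\sqrt R\}}{2R} \leq \frac{1}{2R}$). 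Hence $\norm{a}_\infty + \norm{b}_\infty \leq 1$ and the prefactor is at most $1$, which is precisely the desired inequality.

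Finally, substituting this into \eqref{eq:useBessel1.1} in place of the Lip-$1$ step, the remainder of Lemma~\ref{lem:nonexpansive}—the Bessel inequality for the convolutions, the summation over all paths of length $m$, and the telescoping product in \eqref{eq:useBessel5}—is untouched and delivers the stated constant $\big( \prod_{m=1}^{M+1} \tilde B_m \big)^{1/2}$. The two halves of the hypothesis encode the two requirements: the factor $\frac{1}{\sqrt R}$ forces the $L^\infty$ norms to stay bounded by $R$ through arbitrarily many layers, while $\frac{1}{2R}$ forces the pre-nonlinearity signals into the region where $\abs{\cdot}^2$ is non-expansive, and taking the minimum gives exactly $\frac{\min\{1,2\sqrt R\}}{2R}$. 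I expect the $L^\infty$ propagation bound to be the main obstacle: unlike the Lip-$1$ maps $\sigma_{m,l}$, squaring amplifies magnitudes, so without the $\frac{1}{\sqrt R}$ constraint the infinity norm could grow from layer to layer, destroying both the non-expansiveness and even the $L^2$-boundedness of the feature extractor.
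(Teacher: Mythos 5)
Your proposal is correct and follows essentially the same route as the paper's own proof: the paper likewise splits the hypothesis into the two branches $\norm{g_{m,l}}_1 \leq 1/\sqrt{R}$ (to propagate $\norm{U[q]f}_{\infty} \leq R$ through the layers) and $\norm{g_{m,l}}_1 \leq 1/(2R)$ (to combine with $\abs{\abs{a}^2-\abs{b}^2} \leq (\abs{a}+\abs{b})\abs{a-b}$ and make the squaring non-expansive on the signals actually reached), and then reruns the argument of Theorem \ref{prop:lip} unchanged. Your inductive phrasing of the $L^{\infty}$ bound is just a cleaner rendering of the paper's unrolled product estimate.
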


\begin{remark}
In the case of deformation, $h$ is given by $h = F_{\tau,\omega}$ as defined in (\ref{def:deformation}). If $f$ satisfies the $L^{\infty}$ condition $\norm{f}_{\infty} = R$, so does $h$, since $\norm{h}_{\infty} = \norm{f}_{\infty}$.
\end{remark}

\begin{proof}
Notice that $\frac{ \min \{ 1,2\sqrt{R} \} }{2R} = \min \{ \frac{1}{\sqrt{R}} , \frac{1}{2R} \}$. Hence $\norm{g_{m,l}}_1 \leq 1/\sqrt{R}$ and $\norm{g_{m,l}}_1 \leq 1/2R$. We observe that for any path $q$ with length $\abs{q} = m \geq 1$, say $q = \left( (1,l_1),(2,l_2),\cdots,(M_q,l_{M_q}) \right)$, and for convenience denote $q_1 = \left( (1,l_1) \right)$, $q_2 = \left( (1,l_1),(2,l_2) \right)$, $\cdots$, $q_{M_q-1} = \left( (1,l_1),(2,l_2),\cdots,(M_q-1,l_{M_q-1}) \right)$,
we have
\begin{equation*}
\begin{aligned}
\norm{U[q]f}_{\infty} ~=~ & \norm{\abs{U[q_{M_q-1}]f \ast g_{M_q,l_{M_q}}}^2}_{\infty} \\
~\leq~ & \norm{U[q_{M_q-1}]f}_{\infty}^2 \norm{g_{M_q,l_{M_q}}}_1^2 \\
~\leq~ & \norm{U[q_{M_q-2}]f}_{\infty}^4 \norm{g_{M_q-1,l_{M_q-1}}}_1^4  \norm{g_{M_q,l_{M_q}}}_1^2 \\
~\leq~ & \cdots \\
~\leq~ & \norm{U[q_1]f}_{\infty}^{2^{M_q-1}} \prod_{j=2}^{M_q} \norm{g_{j,l_j}}_1^{2^{M_q-j+1}} \\
~\leq~ & \norm{f}_{\infty}^{2^{M_q}} \prod_{j=1}^{M_q} \norm{g_{j,l_j}}_1^{2^{M_q-j+1}}\\
~\leq~ & R^{2^{M_q}} \prod_{j=1}^{M_q} \left( \frac{1}{\sqrt{R}} \right) ^{2^{M_q-j+1}} \\
~=~ & R^{2^{M_q}} \cdot \left( \frac{1}{\sqrt{R}} \right) ^{(2^{M_q}-1)} \\
~=~ & R ~.
\end{aligned}
\end{equation*}
With this, let $q$ be a path of length $\abs{q} = m < M$, we have for each $l$ that
\begin{equation*}
\begin{aligned}
& \norm{\abs{U[q]f \ast g_{m+1,l}}^2 - \abs{U[q]h \ast g_{m+1,l}}^2}_2^2 \\
~=~ & \norm{ \left( \abs{U[q]f \ast g_{m+1,l}} + \abs{U[q]h \ast g_{m+1,l}} \right) \left( \abs{U[q]f \ast g_{m+1,l}} - \abs{U[q]h \ast g_{m+1,l}} \right) }_2^2 \\
~\leq~ & \norm{ \abs{U[q]f \ast g_{m+1,l}} + \abs{U[q]h \ast g_{m+1,l}} }_1^2 \norm{ \abs{U[q]f \ast g_{m+1,l}} - \abs{U[q]h \ast g_{m+1,l}} }_2^2 \\
~\leq~ & \left( \norm{U[q]f}_{\infty} + \norm{U[q]h}_{\infty} \right)^2 \norm{g_{m+1,l}}_1^2 \norm{ \abs{U[q]f \ast g_{m+1,l}} - \abs{U[q]h \ast g_{m+1,l}} }_2^2 \\
~\leq~ & (R+R)^2(1/2R)^2 \norm{ \abs{U[q]f \ast g_{m+1,l}} - \abs{U[q]h \ast g_{m+1,l}} }_2^2 \\
~=~ & \norm{ \abs{U[q]f \ast g_{m+1,l}} - \abs{U[q]h \ast g_{m+1,l}} }_2^2 \\
~\leq~ & \norm{U[q]f \ast g_{m+1,l} - U[q]h \ast g_{m+1,l} }_2^2 ~.
\end{aligned}
\end{equation*}
Therefore, 
\begin{equation*}
\begin{aligned}
& \sum_{q' \in q \times G_{m+1}^q} \norm{U[q']f-U[q']h}_2^2 + \norm{U[q]f \ast \phi_{m+1} - U[q]h \ast \phi_{m+1}}_2^2 \\
~\leq~ & \sum_{g_{m+1,l} \in G_{m+1}^q} \norm{U[q]f \ast g_{m+1,l}-U[q]h \ast g_{m+1,l}}_2^2 + \norm{U[q]f \ast \phi_{m+1} - U[q]h \ast \phi_{m+1}}_2^2 \\
~=~ & \sum_{g_{m+1,l} \in G_{m+1}^q} \norm{(U[q]f-U[q]h) \ast g_{m+1,l}}_2^2 + \norm{(U[q]f-U[q]h) \ast \phi_{m+1}}_2^2 ~.
\end{aligned}
\end{equation*}
Then by exactly the same inequality as (\ref{eq:useBessel1.2}),
for $0 \leq m \leq M-1$, 
\begin{equation*}
\begin{aligned}
& \sum_{\abs{q} = m+1} \norm{U[q]f-U[q]h}_2^2 + \sum_{\abs{q} = m} \norm{U[q]f \ast \phi_{m+1} - U[q]h \ast \phi_{m+1}}_2^2 \\
\leq & \sum_{\abs{q} = m} {B}_{m+1} \norm{U[q]f-U[q]h}_2^2 ~;
\end{aligned}
\end{equation*}
and for $m = M$,
\begin{equation*}
\sum_{\abs{q} = M} \norm{U[q]f \ast \phi_{m+1} - U[q]h \ast \phi_{m+1}}_2^2 \leq \sum_{\abs{q} = M} {B}_{M+1} \norm{U[q]f-U[q]h}_2^2 ~.
\end{equation*}

The rest of the proof is a minimal modification to that of Theorem \ref{prop:lip}. It is obvious that $\norm{f}_{\infty} = \norm{F_{\tau,\omega}}_{\infty}$.
\end{proof}


In most applications, the $L^{\infty}$-norm of the input is well bounded. For instance, normalized grayscale images have pixel valued between 0 and 1. Even if it is not the case, we can pre-filter the input by widely used sigmoid functions, such as $\tanh$. For instance, in the above case of $\abs{\cdot}^2$, we can use the structure as follows.

\begin{figure}[ht!]
 \centering
 \includegraphics[width=0.33\textwidth]{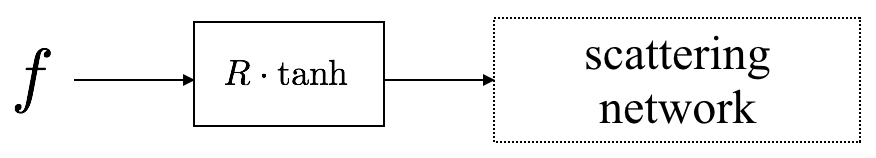}
 \caption{Restrict $\norm{f}_{\infty}$ at the first layer using $R \cdot \tanh$}
 \label{fig:struct4}
\end{figure}

\section{Filter Aggregation}


\subsection{Aggregation by taking norm across filters}

We use filter aggregation to model the pooling stage after convolution. In deep learning there are two widely used pooling operation, max pooling and average pooling. Max pooling is the operation of extracting local maximum of the signal, and can be modeled by an $L^{\infty}$-norm aggregation of copies of shifted and dilated signals. Average pooling is the operation of taking local average of the signal, and can be modeled by a $L^1$-norm aggregation of copies of shifted and dilated signals. When those pooling operations exist, it is still desired that the feature extractor is stable. We analyze this type of aggregation in detail as follows.

We consider filter aggregation by taking pointwise $p$-norms of the inputs. That is, suppose the inputs of the aggregation are $y_1, y_2, \cdots, y_L$ from $L$ different filters, the output is given by $( \sum_{l = 1}^L \abs{y_l}^p )^{1/p}$ for some $p$ with $1 \leq p \leq \infty$. Note that $y_1, y_2, \cdots, y_L$ are all $L^2$ functions and thus the output is also a $L^2$ function. A typical structure is illustrated in Figure \ref{fig:struct2}. Recall all the nonlinearities $\sigma_{m,l}$'s are assumed to be pointwise Lipschitz functions, with Lipschitz bound less than or equal to one.


\begin{figure}[ht!]
 \centering
 \includegraphics[width=\textwidth]{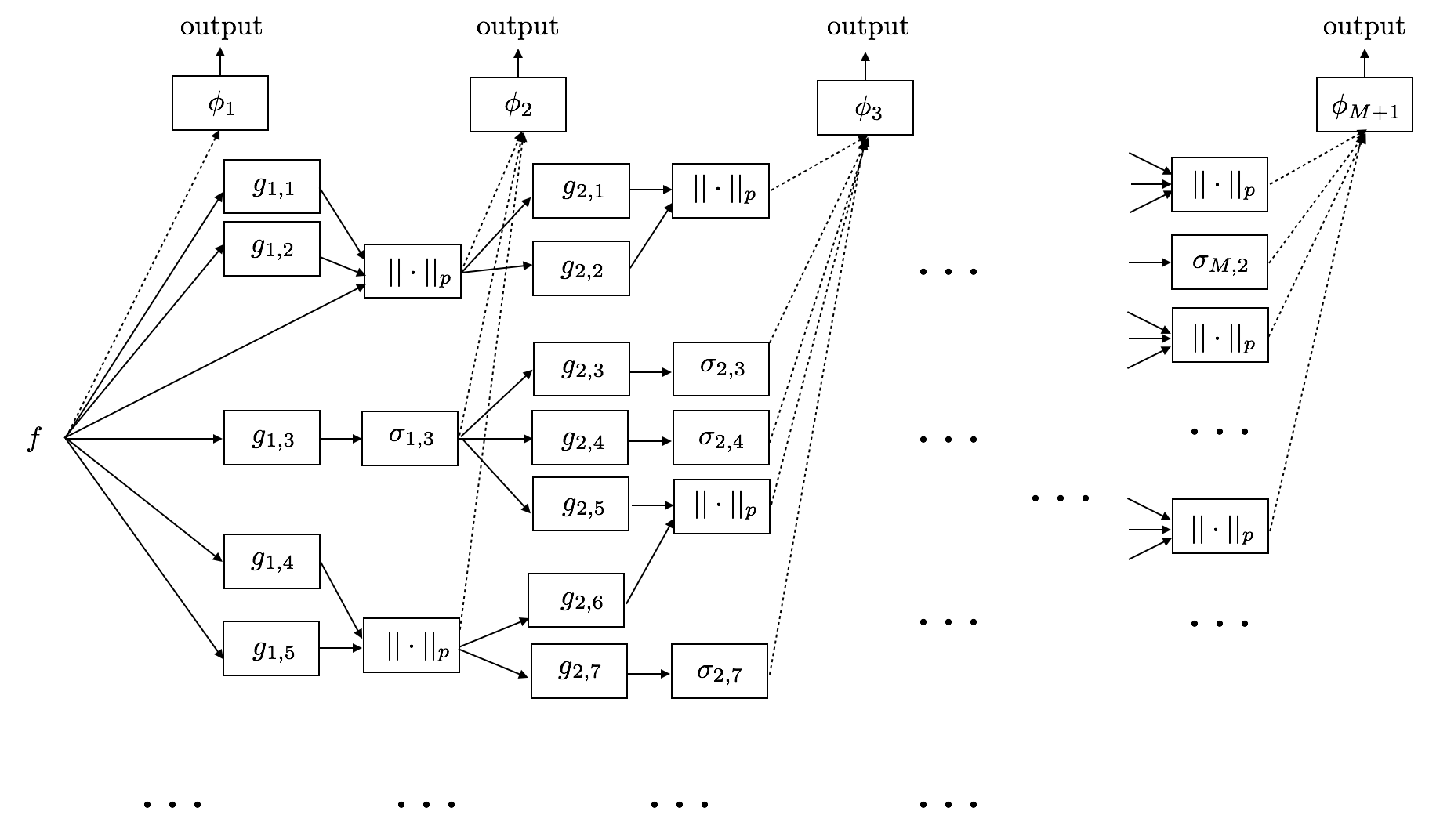}
 \caption{A typical structure of the scattering network with pointwise $p$-norms}
 \label{fig:struct2}
\end{figure}

Note that we do not necessarily aggregate filters in the same layer. For instance, in Figure \ref{fig:struct2}, $f \ast g_{1,1}$, $f \ast g_{1,2}$ are aggregated with $f$. Nevertheless, for the purpose of analysis it suffices to consider the case where the filters to be aggregated are in the same layer of the network. To see this, note that the equivalence relation in Figure \ref{fig:equiv}. We can coin a block which does not change the input (think of a $\delta$-function if we want to make the block ``convolutional''). 
Since a $\delta$-function is not in $L^1(\mathbb{R}^d)$, if we want to apply the theory we have to consider a larger space where the filters stay. In this case, it is natural to consider the Banach algebra
\begin{equation}
\label{def:banachalgebra}
\mathcal{B} = \left\{ f \in \mathcal{S}'(\RR^d), \norm{\hat{f}}_{\infty} < \infty \right\} ~.
\end{equation}

Without loss of generality, we can consider only networks in which the aggregation only takes inputs from the same layer.


\begin{figure}[ht!]
 \centering
 \includegraphics[width=0.5\textwidth]{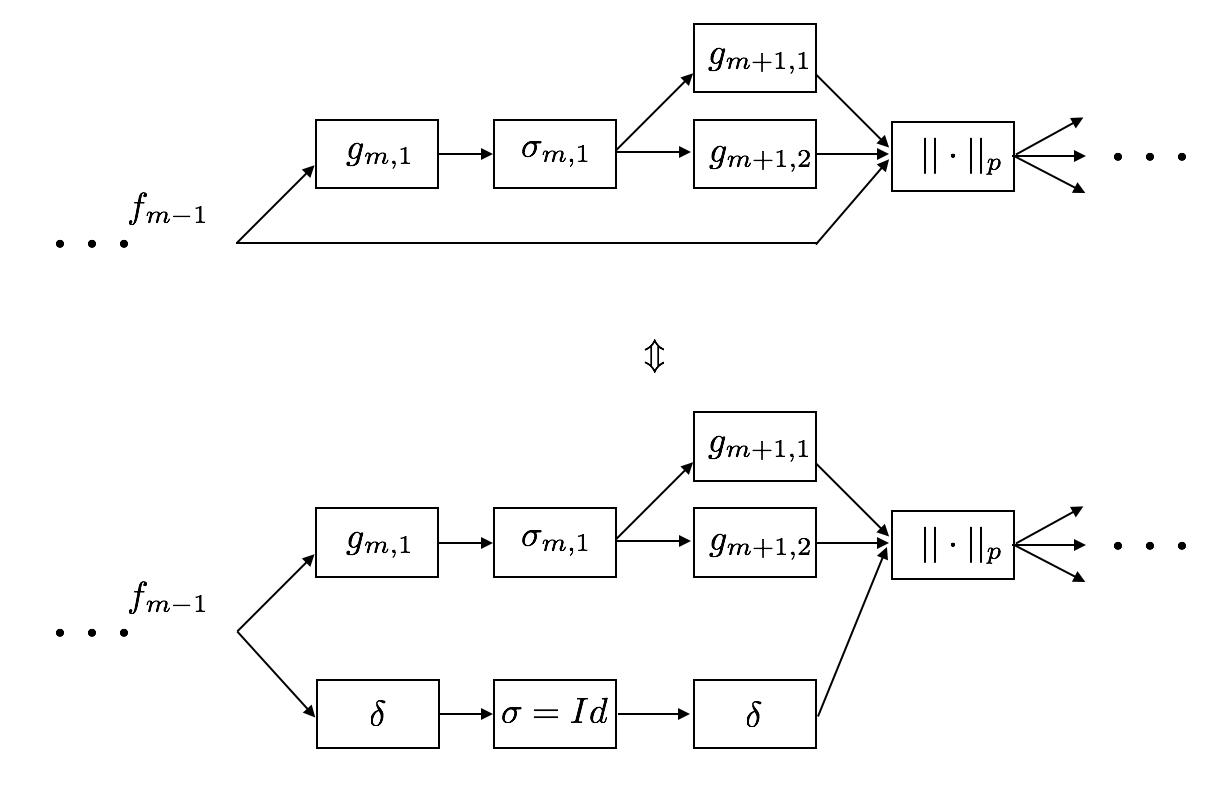}
 \caption{Equivalence for aggregating from different layers}
 \label{fig:equiv}
\end{figure}

Our purpose is to derive inequalities similar to (\ref{eq:useBessel}) and (\ref{eq:useBessel2}). We define a path $q$ to be a sequence of filters in the same manner as in Section 2. Note that by aggregating the filters we no longer have a scattering structure but a general convolutional network. That is, we might have two different filters in the $m$-th layer that flows into the same filter in the $(m+1)$-th layer. Although a scattering network with aggregation by the $p$-norm is still uniquely determined by the collection $Q$ of its paths, the notation $U[q]$ is meaningless since it does not take into account the aggregation. The output in this case may not depend on a single path.

Note that for each $m = 1, \cdots, M$, the $m$-th layer of filters is followed by blocks of $\norm{\cdot}_p$'s and nonlinearity $\sigma_{m,l}$'s. Let $\mu_m$ be the total number of the blocks in the $m$-th layer. Also take $\mu_0 = 1$. Further, we denote the blocks to be $K_{m,1}, \cdots, K_{m,\mu_m}$. For a block $K$ and a filter $g$, we denote $g \leftrightarrow K$ if they are connected in the network. For a block $K_{m,\lambda}$, $1 \leq m \leq M$, $1 \leq \lambda \leq \mu_m$, we denote $G_{m,\lambda}^{\mbox{\small{in}}}$ to be the collection of filters in the $m$-th layer that are connected to $K_{m,\lambda}$ (``in'' implies the filters ``flow into'' the block), and denote $G_{m+1,\lambda}^{\mbox{\small{out}}}$ to be the collection of filters in the ($m+1$)-th layer connected to $K_{m,\lambda}$. Then for each $m = 1, \cdots, M$, $G_m = \mathop{\dot{\bigcup}}_{1 \leq \lambda \leq \mu_m} G_{m,\lambda}^{\mbox{\small{in}}}$; also, for each $m = 1, \cdots, M-1$, $G_m = \mathop{\dot{\bigcup}}_{1 \leq \lambda' \leq \mu_{m+1}} G_{m,\lambda'}^{\mbox{\small{out}}}$.

We define the scattering propagator $\{ U_m^1, \cdots, U_m^{\mu_m} \}_{m=0}^M$ recursively as follows. Define $U_0^1 f := f$. Suppose $\{ U_m^1, \cdots, U_m^{\mu_m} \}$ has been defined for some $m < M$, then for each $\lambda = 1, \cdots, \mu_{m+1}$, we define
\begin{equation}\label{def:scatProp}
U_{m+1}^{\lambda} f := \left( \sum_{g_{m+1,l_{m+1}} \leftrightarrow K_{m+1,\lambda}} \abs{U_m^{\lambda'}f \ast g_{m+1,l_{m+1}}}^p \right)^{\frac{1}{p}} ~,
\end{equation}
where $\lambda'$ satisfies $g_{m+1,l_{m+1}} \leftrightarrow K_{m,\lambda'}$, which is unique by the structure of the network. Now the output $\Phi(f) := \{ U_m^{\lambda} \ast \phi_{m+1} \}_{0 \leq m \leq M, 1 \leq \lambda \leq m}$ is naturally defined. 

To proceed we first prove the following lemma.

\begin{lemma}\label{lem:aggNorm}
Let $\{g_{m,l}\}_{l=1}^L$ be the filters to be aggregated using $p$-norm with $1 \leq p \leq \infty$, then we have the following: suppose $\{f_{m-1,l}\}_{l=1}^L$ and $\{h_{m-1,l}\}_{l=1}^L$ are two sets of inputs to those filters and $f_m$ and $h_m$ are the outputs respectively, then
\begin{equation}
\norm{f_m-h_m}_2^2 \leq \max(1,L^{2/p-1}) \sum_{l=1}^L \norm{ \left( f_{m-1,l} - h_{m-1,l} \right) \ast g_{m,l} }_2^2 ~.
\end{equation}
\end{lemma}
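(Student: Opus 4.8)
The plan is to reduce the whole statement to a pointwise estimate on $\RR^d$ and then integrate. Write $a_l := f_{m-1,l} \ast g_{m,l}$ and $b_l := h_{m-1,l} \ast g_{m,l}$, so that by the definition of the aggregation the two outputs are the pointwise $\ell^p$-norms $f_m(x) = \norm{(a_1(x),\dots,a_L(x))}_{\ell^p}$ and $h_m(x) = \norm{(b_1(x),\dots,b_L(x))}_{\ell^p}$, where $\norm{\cdot}_{\ell^p}$ denotes the $\ell^p$-norm on $\CC^L$ (interpreted as the max when $p = \infty$). Because $\norm{\cdot}_{\ell^p}$ is a genuine norm, the reverse triangle inequality gives, for each fixed $x$, the bound $\abs{f_m(x) - h_m(x)} \le \norm{(a_1(x)-b_1(x),\dots,a_L(x)-b_L(x))}_{\ell^p}$.

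The next step, which is the crux, is a purely finite-dimensional comparison between the $\ell^p$- and $\ell^2$-norms on $\CC^L$: for every $v \in \CC^L$ one should have $\norm{v}_{\ell^p}^2 \le \max(1, L^{2/p-1}) \norm{v}_{\ell^2}^2$. I would split this into two regimes. For $p \ge 2$ the $\ell^p$-norms are monotonically nonincreasing in $p$, so $\norm{v}_{\ell^p} \le \norm{v}_{\ell^2}$ and the factor is $1$, consistent with $L^{2/p-1} \le 1$. For $1 \le p \le 2$ I would apply Hölder's inequality (equivalently the power-mean inequality) with exponent $2/p$ and its conjugate to obtain $\norm{v}_{\ell^p} \le L^{1/p - 1/2} \norm{v}_{\ell^2}$, that is the factor $L^{2/p-1} \ge 1$. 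Combining the two regimes yields the uniform constant $\max(1, L^{2/p-1})$, and the endpoint $p = \infty$ (where $2/p - 1 = -1$ and the factor is $1$) is covered by the first regime.

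Applying this pointwise comparison to the vector $v = (a_l(x) - b_l(x))_{l=1}^L$ and squaring the reverse triangle bound gives $\abs{f_m(x) - h_m(x)}^2 \le \max(1, L^{2/p-1}) \sum_{l=1}^L \abs{a_l(x) - b_l(x)}^2$ for almost every $x$. Integrating in $x$ and using $a_l - b_l = (f_{m-1,l} - h_{m-1,l}) \ast g_{m,l}$, which holds by bilinearity of convolution, then produces exactly $\norm{f_m - h_m}_2^2 \le \max(1, L^{2/p-1}) \sum_{l=1}^L \norm{(f_{m-1,l} - h_{m-1,l}) \ast g_{m,l}}_2^2$, as claimed.

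The genuinely nontrivial ingredient is the $\ell^p$-to-$\ell^2$ comparison together with the correct identification of the constant in each regime; everything else, namely the reverse triangle inequality, the bilinearity of convolution, and the final integration (legitimate since all convolutions involved are in $L^2$), is routine. The only points that require mild care are the endpoint $p = \infty$ and checking that the two branches of the constant agree at $p = 2$, where $L^{2/p-1} = 1$ so that both expressions coincide.
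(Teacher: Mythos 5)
Your proposal is correct and follows essentially the same route as the paper's proof: the reverse triangle inequality for the pointwise $\ell^p$-norm, the finite-dimensional comparison $\norm{v}_p \leq \max(1,L^{1/p-1/2})\norm{v}_2$, bilinearity of convolution, and integration. The only difference is that you additionally justify the $\ell^p$-to-$\ell^2$ comparison in the two regimes $p\geq 2$ and $1\leq p\leq 2$, which the paper simply asserts.
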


\begin{proof}
For $1 \leq p \leq \infty$, applying $\abs{\norm{v_1}_p - \norm{v_2}_p} \leq \norm{v_1 - v_2}_p$ and $\norm{v_1}_p \leq \max(1,L^{1/p-1/2}) \norm{v_1}_2$ for any vectors $v_1$, $v_2$ of length $L$, we have
\begin{equation*}
\begin{aligned}
\norm{f_m-h_m}_2^2  ~=~ & \norm{ \left( \sum_{l=1}^L \abs{f_{m-1,l} \ast g_{m,l}}^p \right) ^{1/p} - \left( \sum_{l=1}^L \abs{h_{m-1,l} \ast g_{m,l}}^p \right) ^{1/p}}_2^2 \\
~\leq~ & \norm{ \left( \sum_{l=1}^L \abs{ \left( f_{m-1,l} - h_{m-1,l} \right) \ast g_{m,l}}^p \right) ^{1/p}}_2^2 \\
~\leq~ & \norm{ \max(1,L^{1/p-1/2}) \left( \sum_{l=1}^L \abs{\left( f_{m-1,l} - h_{m-1,l} \right) \ast g_{m,l}}^2 \right)^{1/2} }_2^2 \\
~=~ & \max(1,L^{2/p-1}) \int \sum_{l=1}^L \abs{\left( f_{m-1,l} - h_{m-1,l} \right) \ast g_{m,l}}^2 \\
~=~ & \max(1,L^{2/p-1}) \sum_{l=1}^L \norm{ \left( f_{m-1,l} - h_{m-1,l} \right) \ast g_{m,l} }_2^2 ~.
\end{aligned}
\end{equation*}
\end{proof}

With Lemma \ref{lem:aggNorm} we can compute for any $m = 0, \cdots, M$ that
\begin{equation*}
\begin{aligned}
& \sum_{\lambda=1}^{\mu_{m+1}} \norm{U_{m+1}^{\lambda}f - U_{m+1}^{\lambda}h}_2^2 \\
~\leq~ & \sum_{\lambda'=1}^{\mu_m} \sum_{l: g_{m+1,l} \in G_{m,\lambda'}^{\mbox{\tiny{out}}}} \max \left( 1,\abs{G_{m+1,\lambda}^{\mbox{\small{in}}}} ^{2/p-1} \right) \norm{U_m^{\lambda'}f \ast g_{m+1,l} - U_m^{\lambda'}h \ast g_{m+1,l}}_2^2 ~,
\end{aligned}
\end{equation*}
where for each $m$, $l$, $G_{m+1,\lambda}^{\mbox{\small{in}}}$ is the unique class of filters that contains $g_{m+1,l}$.
We can then proceed similar to Inequality (\ref{eq:useBessel1.1}) with minor changes. We get the following result on the Lipschitz properties for $\Phi$.

\begin{theorem}\label{prop:lipAggNorm}
Suppose we have a scattering network of depth $M$ including only $p$-norm aggregations. For $m = 1,2,\cdots,M+1$, set 
\begin{equation*}
B_m = \max_{1 \leq \lambda' \leq \mu_{m-1}} \norm{ \sum_{l:g_{m,l} \in G_{m,\lambda'}^{\mbox{\tiny{out}}}} \max \left( 1,\abs{G_{m+1,\lambda}^{\mbox{\small{in}}}} ^{2/p-1} \right) \abs{\hat{g}_{m,l}}^2 + \abs{\hat{\phi}_m}^2 }_{\infty} < \infty 
\end{equation*}
(with the understanding that 
$B_{M+1} = \norm{\hat{\phi}_{M+1}}_{\infty}^2$, that is, $G_{M+1,\lambda'}^{\mbox{\small{out}}} = \emptyset$ for any $1 \leq \lambda' \leq \mu_{M}$), 
where for each $m$, $l$, $G_{m,\lambda}^{\mbox{\small{in}}}$ is the unique class of filters that contains $g_{m,l}$ and $G_{m,\lambda}^{\mbox{\small{in}}}$ denotes its cardinal. 
Then the corresponding feature extractor $\Phi$ 
is Lipschitz continuous in the following manner:
\begin{equation*}
|||\Phi(f)-\Phi(h)||| \leq \left( \prod_{m=1}^{M+1} \tilde{B}_m \right) ^{\frac{1}{2}} \norm{f-h}_2 ~, ~~~\forall f, h \in L^2(\RR^d) ~,
\end{equation*}
where $\tilde{B}_m$'s are defined as in (\ref{def:Bm}) and (\ref{eq:defBessel}).
\end{theorem}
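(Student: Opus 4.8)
The plan is to follow the proof of Theorem~\ref{prop:lip} almost line by line, replacing the path-indexed propagators $U[q]$ by the block-indexed propagators $U_m^{\lambda}$ and replacing the scalar Bessel bound by the aggregation-weighted version in the statement. The only genuinely new analytic ingredient is Lemma~\ref{lem:aggNorm}, which is already established, and the computation immediately preceding the theorem has already reduced each $p$-norm aggregation to a weighted sum of squared convolution norms. So the task reduces to re-deriving the two non-expansiveness estimates (the analogs of (\ref{eq:useBessel}) and (\ref{eq:useBessel2})) in the block-indexed setting, after which the telescoping argument of Theorem~\ref{prop:lip} transfers verbatim.

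First I would establish the analog of Lemma~\ref{lem:nonexpansive}. Starting from the displayed inequality just before the statement, I would add to both sides the pooling contributions $\sum_{\lambda'} \norm{(U_m^{\lambda'}f - U_m^{\lambda'}h) \ast \phi_{m+1}}_2^2$. For each fixed block index $\lambda'$ in the $m$-th layer, I would collect the weighted filter terms $\sum_{l:\, g_{m+1,l} \leftrightarrow K_{m,\lambda'}} \max(1,\abs{G_{m+1,\lambda}^{\mbox{\small{in}}}}^{2/p-1}) \norm{(U_m^{\lambda'}f - U_m^{\lambda'}h) \ast g_{m+1,l}}_2^2$ together with the single $\phi_{m+1}$ term, pass to the Fourier side by Plancherel, and factor out $\abs{\widehat{U_m^{\lambda'}f} - \widehat{U_m^{\lambda'}h}}^2$. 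The residual symbol is precisely $\sum_l \max(1,\abs{G_{m+1,\lambda}^{\mbox{\small{in}}}}^{2/p-1})\abs{\hat{g}_{m+1,l}}^2 + \abs{\hat{\phi}_{m+1}}^2$, whose $L^{\infty}$ norm, maximized over $\lambda'$, is by definition $B_{m+1}$; bounding by it and summing over $\lambda'$ yields the desired estimate. For $m = M$ the filter terms vanish and only $\abs{\hat{\phi}_{M+1}}^2$ survives, giving $B_{M+1} = \norm{\hat{\phi}_{M+1}}_{\infty}^2$ by Young's inequality.

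With these two inequalities in hand, the remainder copies the proof of Theorem~\ref{prop:lip}: summing over $m = 0,\dots,M$ telescopes the pooling outputs (the analog of (\ref{eq:useBessel3})), the propagator-energy recursion (the analog of (\ref{eq:useBessel4})) gives $\sum_{\lambda} \norm{U_{m+1}^{\lambda}f - U_{m+1}^{\lambda}h}_2^2 \leq \big(\prod_{m'=1}^{m+1}\tilde{B}_{m'}\big)\norm{f-h}_2^2$, and substituting produces $\norm{|\Phi(f)-\Phi(h)|}^2 \leq \big(\prod_{m=1}^{M+1}\tilde{B}_m\big)\norm{f-h}_2^2$, where $B_m \leq \tilde{B}_m$ is used throughout exactly as before.

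I expect the main obstacle to be organizational rather than analytical, namely keeping the block bookkeeping consistent so that the double sum over $(\lambda', l)$ regroups correctly. The key point is to use the two partition identities, $G_m = \mathop{\dot{\bigcup}}_{\lambda} G_{m,\lambda}^{\mbox{\small{in}}} = \mathop{\dot{\bigcup}}_{\lambda'} G_{m,\lambda'}^{\mbox{\small{out}}}$, to ensure that every $(m+1)$-th layer filter is counted exactly once, paired with the correct input block $\lambda'$ (the block it flows \emph{out} of, which selects the operand $U_m^{\lambda'}$) while carrying the weight dictated by the block it flows \emph{into} (which sets the aggregation exponent). It is this dual use of the in- and out-partitions that makes the per-block symbol collapse cleanly onto $B_{m+1}$; once that correspondence is pinned down, the rest is a mechanical transcription of the argument for Theorem~\ref{prop:lip}.
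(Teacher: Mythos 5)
Your proposal is correct and follows essentially the same route as the paper: the paper likewise invokes Lemma \ref{lem:aggNorm} to reduce each $p$-norm block to a weighted sum of squared convolution norms, regroups the double sum over blocks and filters via the in/out partitions so that the per-block symbol is bounded by $B_{m+1}$, and then repeats the telescoping argument of Theorem \ref{prop:lip} with only notational changes. Your write-up is in fact somewhat more explicit than the paper's own two-line sketch, but it contains no new idea and no gap.
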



\subsection{Aggregation by pointwise multiplication}
In convolutional networks that includes time sequences, it is often useful to take the pointwise product of two intermediate outputs. For instance, in the Long Short-Term Memory (LSTM) networks introduced in \cite{HS97, SVSS15}, 
multiplication is used when we have two branches and want to use one branch for information extraction and the other for controlling, or so called ``gating''. A typical structure is illustrated in Figure \ref{fig:struct3}. The multiplication brings two outputs into one.


\begin{figure}[ht!]
 \centering
 \includegraphics[width=0.75\textwidth]{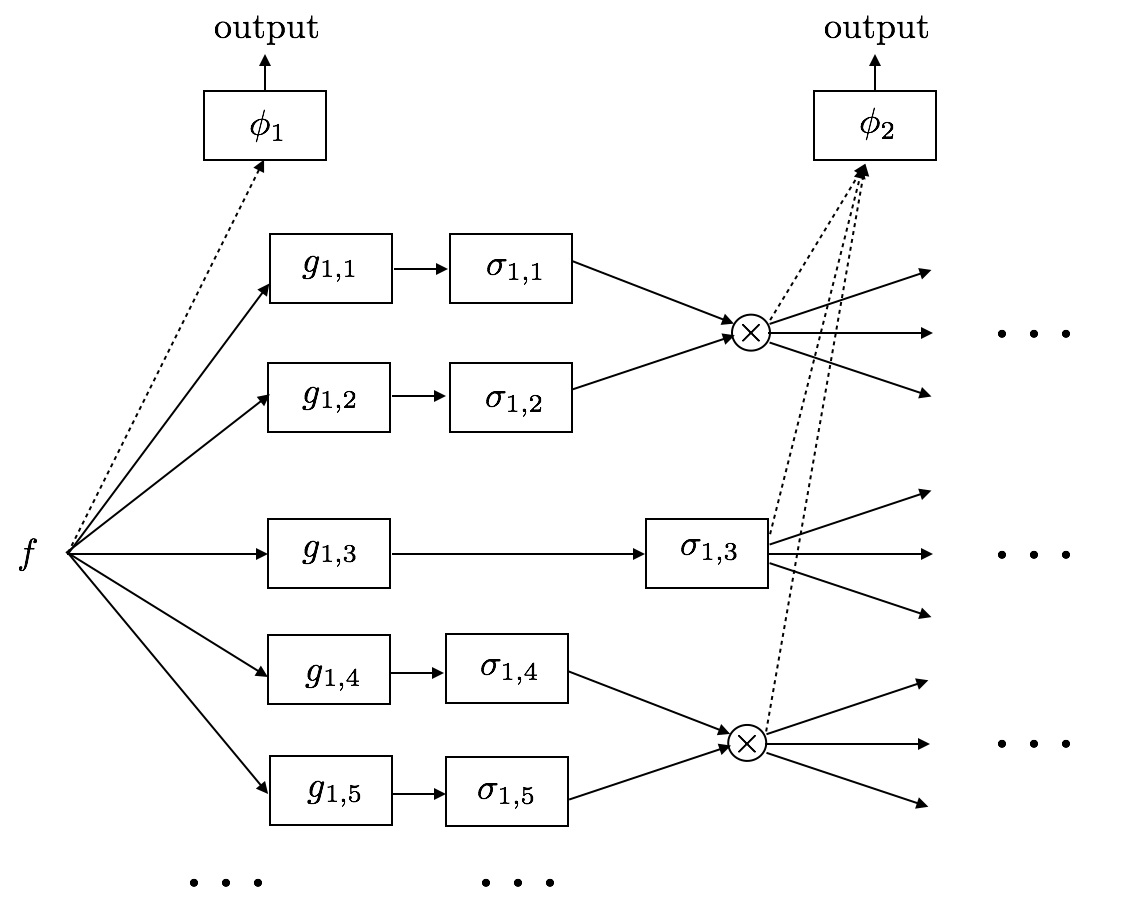}
 \caption{A typical structure of a scattering network with multiplication}
 \label{fig:struct3}
\end{figure}



Similar to the previous section, we consider multiplication blocks (if a filter is not followed by a multiplication block, such as $g_{1,3}$ in Figure \ref{fig:struct3}, we still consider a block after $\abs{\cdot}$), $J_{m,\lambda}$, $1 \leq m \leq M$, $1 \leq \lambda \leq \mu_m$. We define $G_{m,\lambda}^{\mbox{\small{in}}}$ and $G_{m+1,\lambda}^{\mbox{\small{out}}}$ to be the filters in the $m$-th and the ($m+1$)-th layer that are connected to $J_{m,\lambda}$, respectively. Note that $\abs{G_{m,\lambda}^{\mbox{\small{in}}}} \in \{1,2\}$. The scattering propagator $U_m^{\lambda}$'s and output generating operator $\Phi$ are defined similarly. The Lipschitz property is given by the following Theorem.

\begin{theorem}\label{prop:lipMult}
Suppose we have a scattering network of depth $M$ involving only pointwise multiplication blocks. For $m = 1,2,\cdots,M+1$,
\begin{equation*}
B_m = \max_{1 \leq \lambda \leq \mu_m} \norm{ \sum_{g_{m,l} \in G_{m,\lambda}^{\mbox{\tiny{out}}}} \abs{G_{m,\lambda}^{\mbox{\small{in}}}} \abs{\hat{g}_{m,l}}^2 + \abs{\hat{\phi}_m}^2 }_{\infty} < \infty
\end{equation*}
(with the understanding that 
$B_{M+1} = \norm{\hat{\phi}_{M+1}}_{\infty}^2$, that is, $G_{M+1,\lambda}^{\mbox{\small{out}}} = \emptyset$ for all $1 \leq \lambda \leq M$),
where for each $m$, $l$, $G_{m,\lambda}^{\mbox{\small{in}}}$ is the unique class of filters that contains $g_{m,l}$.  
Suppose $\norm{g_{m,l}}_1 \leq 1$ for all $m$, $l$.
Then the corresponding feature extractor $\Phi$ 
is Lipschitz continuous on the ball of radius $1$ under infinity norm in the following manner:
\begin{equation*}
|||\Phi(f)-\Phi(h)||| \leq \left( \prod_{m=1}^{M+1} \tilde{B}_m \right) ^{\frac{1}{2}} \norm{f-h}_2 ~,
\end{equation*}
for any $f$, $h \in L^2({\mathbb{R}^d})$ with $\norm{f}_{\infty} \leq  1$, $\norm{h}_{\infty} \leq  1$, where $\tilde{B}_m$'s are defined as in (\ref{def:Bm}) and (\ref{eq:defBessel}). 
\end{theorem}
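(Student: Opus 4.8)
The plan is to mirror the proof of Theorem \ref{prop:lip} as closely as possible, the only genuinely new ingredient being a Lipschitz estimate for a multiplication block in place of the $p$-norm estimate of Lemma \ref{lem:aggNorm}. The hypotheses $\norm{g_{m,l}}_1 \leq 1$ and $\norm{f}_{\infty} \leq 1$ play exactly the role they played in Theorem \ref{prop:lipSquare}: they confine every intermediate output to the unit ball of $L^{\infty}$, which is what makes the pointwise product of two branches non-expansive.

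First I would establish, by induction on $m$, that $\norm{U_m^{\lambda} f}_{\infty} \leq 1$ for every $m$ and every $\lambda$ whenever $\norm{f}_{\infty} \leq 1$. The base case is $U_0^1 f = f$. For the inductive step, each factor satisfies $\norm{U_m^{\lambda'} f \ast g_{m+1,l}}_{\infty} \leq \norm{U_m^{\lambda'} f}_{\infty} \norm{g_{m+1,l}}_1 \leq 1$ by Young's inequality, and since $U_{m+1}^{\lambda} f$ is a pointwise product of at most $\abs{G_{m+1,\lambda}^{\mbox{\small{in}}}} \leq 2$ such factors, each bounded pointwise by $1$, the product is again bounded by $1$. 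This is where the argument is strictly simpler than in Theorem \ref{prop:lipSquare}: because the individual factors never exceed $1$, multiplying them cannot inflate the sup-norm, so no $R$-dependent shrinkage of $\norm{g}_1$ is required and the clean bound $\norm{g_{m,l}}_1 \leq 1$ suffices.

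The heart of the proof is the per-block estimate, the multiplicative analogue of Lemma \ref{lem:aggNorm}. Fix a multiplication block $J_{m+1,\lambda}$ and write its inputs for $f$ as $a_i = U_m^{\lambda_i'} f \ast g_i$ and for $h$ as $\tilde{a}_i = U_m^{\lambda_i'} h \ast g_i$, where $i$ ranges over $G_{m+1,\lambda}^{\mbox{\small{in}}}$ (at most two indices). Using the telescoping identity
\begin{equation*}
\abs{a_1}\abs{a_2} - \abs{\tilde{a}_1}\abs{\tilde{a}_2} = \abs{a_1}\left(\abs{a_2}-\abs{\tilde{a}_2}\right) + \abs{\tilde{a}_2}\left(\abs{a_1}-\abs{\tilde{a}_1}\right)
\end{equation*}
together with the pointwise bounds $\abs{a_1} \leq 1$ and $\abs{\tilde{a}_2} \leq 1$ from the previous step and the elementary inequality $\abs{\,\abs{u}-\abs{v}\,} \leq \abs{u-v}$, I would obtain the pointwise majorization $\abs{\abs{a_1}\abs{a_2}-\abs{\tilde{a}_1}\abs{\tilde{a}_2}} \leq \abs{a_1-\tilde{a}_1} + \abs{a_2-\tilde{a}_2}$. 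Squaring, integrating, and applying $(s+t)^2 \leq 2(s^2+t^2)$ then gives
\begin{equation*}
\norm{U_{m+1}^{\lambda} f - U_{m+1}^{\lambda} h}_2^2 \leq \abs{G_{m+1,\lambda}^{\mbox{\small{in}}}} \sum_{i} \norm{(U_m^{\lambda_i'} f - U_m^{\lambda_i'} h) \ast g_i}_2^2 ~,
\end{equation*}
the factor $\abs{G_{m+1,\lambda}^{\mbox{\small{in}}}} \in \{1,2\}$ being exactly the $n$ in the power-mean bound $(\sum_{i=1}^n s_i)^2 \leq n \sum_{i=1}^n s_i^2$. This is precisely the factor $\abs{G_{m,\lambda}^{\mbox{\small{in}}}}$ that multiplies $\abs{\hat{g}_{m,l}}^2$ in the definition of $B_m$.

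From here the argument is a routine transcription of Theorem \ref{prop:lip}. Summing the per-block estimate over all blocks in a layer, adjoining the output term $\norm{(U_m^{\lambda} f - U_m^{\lambda} h) \ast \phi_{m+1}}_2^2$, and invoking the Bessel inequality against the modified bound $B_{m+1}$ reproduces the two inequalities (\ref{eq:useBessel}) and (\ref{eq:useBessel2}) verbatim; telescoping across layers together with $B_m \leq \tilde{B}_m$ then yields $|||\Phi(f)-\Phi(h)|||^2 \leq \left( \prod_{m=1}^{M+1} \tilde{B}_m \right) \norm{f-h}_2^2$. I expect the main obstacle to be bookkeeping rather than analysis: one must check that both $\norm{f}_{\infty} \leq 1$ and $\norm{h}_{\infty} \leq 1$ persist through every layer simultaneously, so that both $\abs{a_1} \leq 1$ and $\abs{\tilde{a}_2} \leq 1$ are available in the telescoping step, and one must keep the block indices $\lambda, \lambda'$ consistent when a single filter feeds a two-input product, since then the same difference $U_m^{\lambda'} f - U_m^{\lambda'} h$ may be charged to two distinct outgoing branches.
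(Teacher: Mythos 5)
Your proposal is correct and follows essentially the same route as the paper: your inductive $L^\infty$ bound is the paper's Lemma \ref{lem:normPreMult}, your telescoping-plus-$(s+t)^2\leq 2(s^2+t^2)$ estimate for a multiplication block is exactly Lemma \ref{lem:nonexpansiveMult} (the paper applies H\"older at the norm level with general Lip-$1$ nonlinearities $\sigma_{m,l}$ rather than your pointwise majorization with $\abs{\cdot}$, but the constant and structure are identical), and the remaining telescoping across layers is the same ``minimal modification'' of Theorem \ref{prop:lip} that the paper invokes. No gaps.
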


This follows by minimal modification in the proof of Theorem \ref{prop:lip} once we prove the following two lemmas. Lemma \ref{lem:normPreMult} implies that the infinite norm of the inputs to each layer have the same bound. Lemma \ref{lem:nonexpansiveMult} gives a similar inequality to (\ref{eq:useBessel1.1}).
\begin{lemma}\label{lem:normPreMult}
(1) Let $g_{m,1}$ and $g_{m,2}$ be the two filters to be aggregated using multiplication with $\norm{g_{m,j}}_1 \leq 1$ for $j = 1,2$. We have the following: suppose $f_{m-1,1}$ and $f_{m-1,2}$ are the inputs to the filters respectively with $\norm{f_{m-1,j}}_{\infty} \leq 1$ for $j = 1,2$, then the output $f_m$ satisfies $\norm{f_m}_{\infty} \leq 1$; \\
(2) Let $g_m$ be a filter not to be aggregated with $\norm{g_m}_1 \leq 1$, then suppose $f_{m-1}$ is the input to the filter with $\norm{f_{m-1}}_{\infty} \leq 1$, we have the output $f_m$ satisfies $\norm{f_m}_{\infty} \leq 1$.
\end{lemma}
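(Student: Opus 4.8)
The plan is to reduce both parts to two elementary facts: Young's convolution inequality $\norm{u \ast g}_\infty \leq \norm{u}_\infty \norm{g}_1$, and the submultiplicativity of the supremum norm under pointwise products, $\norm{uv}_\infty \leq \norm{u}_\infty \norm{v}_\infty$. The whole point is that a multiplication block produces, at each point $x$, the product of the moduli of the two convolutions feeding into it, so its supremum norm factors across the two branches.

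For part (2) I would note that the single-filter block outputs $f_m = \abs{f_{m-1} \ast g_m}$, so Young's inequality immediately gives $\norm{f_m}_\infty \leq \norm{f_{m-1}}_\infty \norm{g_m}_1 \leq 1$ from the two hypotheses $\norm{f_{m-1}}_\infty \leq 1$ and $\norm{g_m}_1 \leq 1$. This is the elementary building block on which part (1) rests.

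For part (1) the output is the pointwise product
\begin{equation*}
f_m(x) = \abs{(f_{m-1,1} \ast g_{m,1})(x)} \cdot \abs{(f_{m-1,2} \ast g_{m,2})(x)} ~.
\end{equation*}
I would first split $\norm{f_m}_\infty$ across the two factors using submultiplicativity, and then bound each factor exactly as in part (2); multiplying the two resulting bounds, each at most $1$, yields $\norm{f_m}_\infty \leq 1$.

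There is essentially no analytic difficulty here. The only step needing care is to fix the correct model of the multiplication block — namely that it outputs the pointwise product of the two moduli, with no additional nonlinearity inserted — which is the convention stipulated in the paragraph preceding the lemma. I would close by remarking that these two bounds are precisely what makes the hypothesis $\norm{\cdot}_\infty \leq 1$ self-propagating: an induction over the layers then shows that every intermediate output $U_m^\lambda f$ remains in the unit ball under $\norm{\cdot}_\infty$, which is exactly what Lemma \ref{lem:nonexpansiveMult} and the proof of Theorem \ref{prop:lipMult} require.
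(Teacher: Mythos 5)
Your proof is correct and follows essentially the same route as the paper's: part (2) is Young's inequality, and part (1) combines sup-norm submultiplicativity of the pointwise product with Young's inequality on each branch. The only cosmetic difference is that the paper writes the block output as $\sigma_{m,1}(f_{m-1,1}\ast g_{m,1})\cdot\sigma_{m,2}(f_{m-1,2}\ast g_{m,2})$ with general Lip-$1$ nonlinearities and uses $\abs{\sigma_{m,j}(y)}\leq\abs{y}$, whereas you specialize the nonlinearity to $\abs{\cdot}$; the estimate is otherwise identical.
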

\begin{proof}
(2) directly follows from Young's Inequality. For (1), we have
\begin{equation*}
\begin{aligned}
\norm{f_m}_{\infty} ~=~ & \norm{\sigma_{m,1} ( f_{m-1,1} \ast g_{m,1} ) \cdot \sigma_{m,2} (f_{m-1,2} \ast g_{m,2})}_{\infty} \\
~\leq~ & \norm{f_{m-1,1} \ast g_{m,1}}_{\infty} \norm{f_{m-1,2} \ast g_{m,2}}_{\infty} \\
~\leq~ & \norm{f_{m-1,1}}_{\infty} \norm{f_{m-1,2}}_{\infty} \norm{g_{m,1}}_1 \norm{g_{m,2}}_1 \\
~\leq~ & 1 ~.
\end{aligned}
\end{equation*}
\end{proof}

\begin{lemma}\label{lem:nonexpansiveMult}
Let $g_{m,1}, g_{m,2}$ be the two filters to be aggregated using a  multiplication block with $\norm{g_{m,j}}_1 \leq 1$ for $j = 1,2$. We have the following: suppose $\{f_{m-1,j}\}_{j=1}^2$ and $\{h_{m-1,j}\}_{j=1}^2$ are two sets of inputs to those filters with infinite norm bounded by $1$, and $f_m$ and $h_m$ are the outputs respectively, then
\begin{equation*}
\norm{f_m - h_m}_2^2 \leq 2 \norm{ (f_{m-1,1} - h_{m-1,1}) \ast g_{m,1} }_2^2 + 2 \norm{ (f_{m-1,2} - h_{m-1,2}) \ast g_{m,2} }_2^2 ~.
\end{equation*}
\end{lemma}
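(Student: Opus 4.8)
The plan is to reduce the product estimate to two separate convolution estimates by means of the standard product-difference decomposition, using the $L^\infty$ bounds to control the resulting cross terms. Writing $f_m = \sigma_{m,1}(f_{m-1,1}\ast g_{m,1})\cdot\sigma_{m,2}(f_{m-1,2}\ast g_{m,2})$ and similarly for $h_m$, I would abbreviate the four factors as $a_j = \sigma_{m,j}(f_{m-1,j}\ast g_{m,j})$ and $b_j = \sigma_{m,j}(h_{m-1,j}\ast g_{m,j})$ for $j=1,2$, so that $f_m = a_1 a_2$ and $h_m = b_1 b_2$ pointwise.

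First I would decompose pointwise $a_1 a_2 - b_1 b_2 = a_1(a_2 - b_2) + b_2(a_1 - b_1)$, so that the triangle inequality gives $\abs{a_1 a_2 - b_1 b_2} \leq \abs{a_1}\,\abs{a_2 - b_2} + \abs{b_2}\,\abs{a_1 - b_1}$. Next I would invoke the infinity-norm bounds: exactly as in the proof of Lemma \ref{lem:normPreMult}, Young's inequality together with $\norm{f_{m-1,j}}_{\infty} \leq 1$ and $\norm{g_{m,j}}_1 \leq 1$ yields $\norm{f_{m-1,j}\ast g_{m,j}}_{\infty} \leq 1$, and since each $\sigma_{m,j}$ is $1$-Lipschitz with $\sigma_{m,j}(0)=0$ we get $\norm{a_j}_{\infty} \leq 1$ and $\norm{b_j}_{\infty} \leq 1$. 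In particular $\abs{a_1}\leq 1$ and $\abs{b_2}\leq 1$ almost everywhere, whence $\abs{a_1 a_2 - b_1 b_2} \leq \abs{a_2 - b_2} + \abs{a_1 - b_1}$.

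Finally I would square and apply the elementary pointwise inequality $(x+y)^2 \leq 2x^2 + 2y^2$, then integrate, obtaining $\norm{f_m - h_m}_2^2 \leq 2\norm{a_1 - b_1}_2^2 + 2\norm{a_2 - b_2}_2^2$. The $1$-Lipschitz property of the $\sigma_{m,j}$ then bounds $\norm{a_j - b_j}_2 = \norm{\sigma_{m,j}(f_{m-1,j}\ast g_{m,j}) - \sigma_{m,j}(h_{m-1,j}\ast g_{m,j})}_2 \leq \norm{(f_{m-1,j} - h_{m-1,j})\ast g_{m,j}}_2$ for each $j$, which delivers the claimed inequality.

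The computation itself is short, so the only real subtlety is bookkeeping: one must keep in the first term the factor $a_1$ and in the second term the factor $b_2$, precisely because those are the factors carrying an $L^\infty$ bound of $1$ (had one kept $a_2$ and $b_1$ instead, the same argument would work, but mixing the choices would fail to exploit the bounds correctly). The factor of $2$ is unavoidable and enters exactly through $(x+y)^2 \leq 2x^2 + 2y^2$; it reflects that the two multiplied branches contribute additively to the overall deviation.
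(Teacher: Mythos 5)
Your proposal is correct and follows essentially the same route as the paper: the same decomposition $a_1a_2-b_1b_2=a_1(a_2-b_2)+b_2(a_1-b_1)$, the same use of Young's inequality and the $L^\infty$ bounds to control the retained factors, the elementary inequality giving the factor of $2$, and the Lip-$1$ property of the $\sigma_{m,j}$ at the end. The only cosmetic difference is that you apply the triangle and squaring steps pointwise before integrating, whereas the paper works directly with $L^2$ norms and H\"older's inequality; your explicit remark that $\sigma_{m,j}(0)=0$ is needed for the $L^\infty$ bound on $a_j$, $b_j$ makes an assumption the paper leaves implicit.
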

\begin{proof}
\begin{equation*}
\begin{aligned}
& \norm{f_m - h_m}_2^2 \\ 
~=~ & \Vert \sigma_{m,1} ( f_{m-1,1} \ast g_{m,1} ) \sigma_{m,2} (f_{m-1,2} \ast g_{m,2}) - \\ 
& \qquad \sigma_{m,1} ( h_{m-1,1} \ast g_{m,1} ) \sigma_{m,2} (h_{m-1,2} \ast g_{m,2}) \Vert^2_2 \\
~=~ & \Vert \sigma_{m,1} ( f_{m-1,1} \ast g_{m,1} ) \sigma_{m,2} (f_{m-1,2} \ast g_{m,2}) - \\
& \qquad \sigma_{m,1} ( f_{m-1,1} \ast g_{m,1} ) \sigma_{m,2} (h_{m-1,2} \ast g_{m,2}) + \\
& \qquad \sigma_{m,1} ( f_{m-1,1} \ast g_{m,1} ) \sigma_{m,2} (h_{m-1,2} \ast g_{m,2}) - \\
& \qquad \sigma_{m,1} ( h_{m-1,1} \ast g_{m,1} ) \sigma_{m,2} ( h_{m-1,2} \ast g_{m,2}) \Vert_2^2 \\
~\leq~ & 2 \Vert \sigma_{m,1} ( f_{m-1,1} \ast g_{m,1} ) \sigma_{m,2} (f_{m-1,2} \ast g_{m,2}) - \\
& \qquad \sigma_{m,1} ( f_{m-1,1} \ast g_{m,1} ) \sigma_{m,2} (h_{m-1,2} \ast g_{m,2}) \Vert_2^2 + \\
& \qquad 2 \Vert \sigma_{m,1} ( f_{m-1,1} \ast g_{m,1} ) \sigma_{m,2} (h_{m-1,2} \ast g_{m,2}) - \\
& \qquad \sigma_{m,1} ( h_{m-1,1} \ast g_{m,1} ) \sigma_{m,2} (h_{m-1,2} \ast g_{m,2}) \Vert_2^2 \\
~\leq~ & 2 \Vert \sigma_{m,1} ( f_{m-1,1} \ast g_{m,1} ) \Vert_{\infty}^2 \Vert \sigma_{m,2} (f_{m-1,2} \ast g_{m,2}) - \\
& \qquad \sigma_{m,2} (h_{m-1,2} \ast g_{m,2}) \Vert_2^2 + \\
& \qquad 2 \norm{\sigma_{m,2} (h_{m-1,2} \ast g_{m,2} ) }_{\infty}^2 \Vert \sigma_{m,1} ( f_{m-1,1} \ast g_{m,1} ) - \\
& \qquad \sigma_{m,1} ( h_{m-1,1} \ast g_{m,1} ) \Vert_2^2 \\
~\leq~ & 2 \norm{f_{m-1,1}}_{\infty}^2 \norm{g_{m,1}}_1^2 \norm{ (f_{m-1,2} - h_{m-1,2}) \ast g_{m,2}}_2^2 + \\
& \qquad 2 \norm{h_{m-1,2}}_{\infty}^2 \norm{g_{m,2}}_1^2 \norm{ (f_{m-1,1} - h_{m-1,1}) \ast g_{m,1}}_2^2 \\
~\leq~ & 2 \norm{ (f_{m-1,1} - h_{m-1,1}) \ast g_{m,1} }_2^2 + \\
& \qquad 2 \norm{ (f_{m-1,2} - h_{m-1,2}) \ast g_{m,2} }_2^2 ~.
\end{aligned}
\end{equation*}
\end{proof}

For a general $f \in L^2(\mathbb{R}^d)$, as discussed in the end of Section 2, we can first let it go through a sigmoid-like function, then go through the scattering network. 

\subsection{Mixed aggregations}
The two types of aggregation blocks can be mixed together in the same networks (which is the common case in applications). The precise statement of the Lipschitz property becomes a little cumbersome to state in full generality. However, $L^2$-norm estimates can be combined using Theorem \ref{prop:lip}, \ref{prop:lipSquare}, \ref{prop:lipAggNorm} and \ref{prop:lipMult}. This is illustrated in the next section.

\section{Examples of estimating the Lipschitz constant}
We use three different approaches to estimate the Lipschitz constant. The first is by propagating backward from the outputs, regardless of what we have done above. The second is by directly applying what we have discussed above. The third is by deriving a lower bound, either because of the specifies of the network (the first example), or by numerical simulating (the second example). 

\subsection{A standard Scattering Network}
We first give an example of a standard scattering networks of three layers. The structure is as Figure 2.1 in \cite{Mallat12}. We consider the 1D case and the wavelet given by the Haar wavelets
\begin{equation*}
{\phi}(t) =
\begin{cases}
1, & \text{if}\ 0 \leq t < 1 \\
0, & \text{otherwise}
\end{cases}
\qquad \text{and} \qquad
{\psi}(t) =
\begin{cases}
1, & \text{if}\ 0 \leq t < 1/2 \\
-1, & \text{if}\ 1/2 \leq t < 1 \\
0, & \text{otherwise}
\end{cases}
\ .
\end{equation*}

In this section, 
the sinc function is defined as $\sinc(x) = \sin(\pi x) / (\pi x)$ if $x \neq 0$ and $0$ if $x = 0$.
 
We first look at real input functions. In this case the Haar wavelets $\phi$ and $\psi$ readily satisfies Equation (2.7) in \cite{Mallat12}. We take $J = 3$ in our example and consider all possible three-layer paths for $j = 0,-1,-2$. We have three branches from each node. Therefore we have outputs from $1+3+3^2+3^3 = 40$ nodes.


To convert the settings to our notations in this paper, we have a three-layer convolutional network (as in Section 2) for which the filters are given by $g_{1,l_1}, l_1 \in \{1,2,3\}$, $g_{2,l_2}, l_2 \in \{1,\cdots,9\}$ and $g_{3,l_3}, l_3 \in \{1,\cdots,27\}$, where
\begin{equation*}
g_{m,l} = 
\begin{cases}
\psi, & \text{if} \mod(l,3) = 1; \\
\psi_{2^{-1}}, & \text{if} \mod(l,3) = 2; \\
\psi_{2^{-2}}, & \text{if} \mod(l,3) = 0.
\end{cases}
\end{equation*}
$q = ((1,l_1),(2,l_2),(3,l_3))$ is a path if and only if $l_2 \in \{3l_1-k, k=1,2,3\}$ and $l_3 \in \{3l_2-k, k=1,2,3\}$. $q = ((1,l_1),(2,l_2))$ is a path if and only if $l_2 \in \{3l_1-k, k=1,2,3\}$. The set of all paths is
\begin{equation*}
\begin{aligned}
Q = ~& \{ \emptyset, \{(1,1)\}, \{(1,2)\}, \{(1,3)\}, \{(1,1),(2,1)\}, \{(1,1),(2,2)\}, \{(1,1),(2,3)\}, \\
& \{(1,2),(2,4)\}, \{(1,2),(2,5)\}, \{(1,2),(2,6)\}, \{(1,3),(2,7)\}, \{(1,3),(2,8)\}, \\
& \{(1,3),(2,9)\} ~\cup~ \{(1,l_1),(2,l_2),(3,l_3), 1 \leq l_1 \leq 3,  \\
& \qquad l_2 \in \{3l_1-k, k=1,2,3\}, l_3 \in \{3l_2-k, k=1,2,3\} \} ~.
\end{aligned}
\end{equation*}
Also, for the output generation, $\phi_1 = \phi_2 = \phi_3 = \phi_4 = 2^{-J}\phi(2^{-J}\cdot)$. An illustration of the network is as in Figure \ref{fig:struct6}.
\begin{figure}[ht!]
 \centering
 \includegraphics[width=0.8\textwidth]{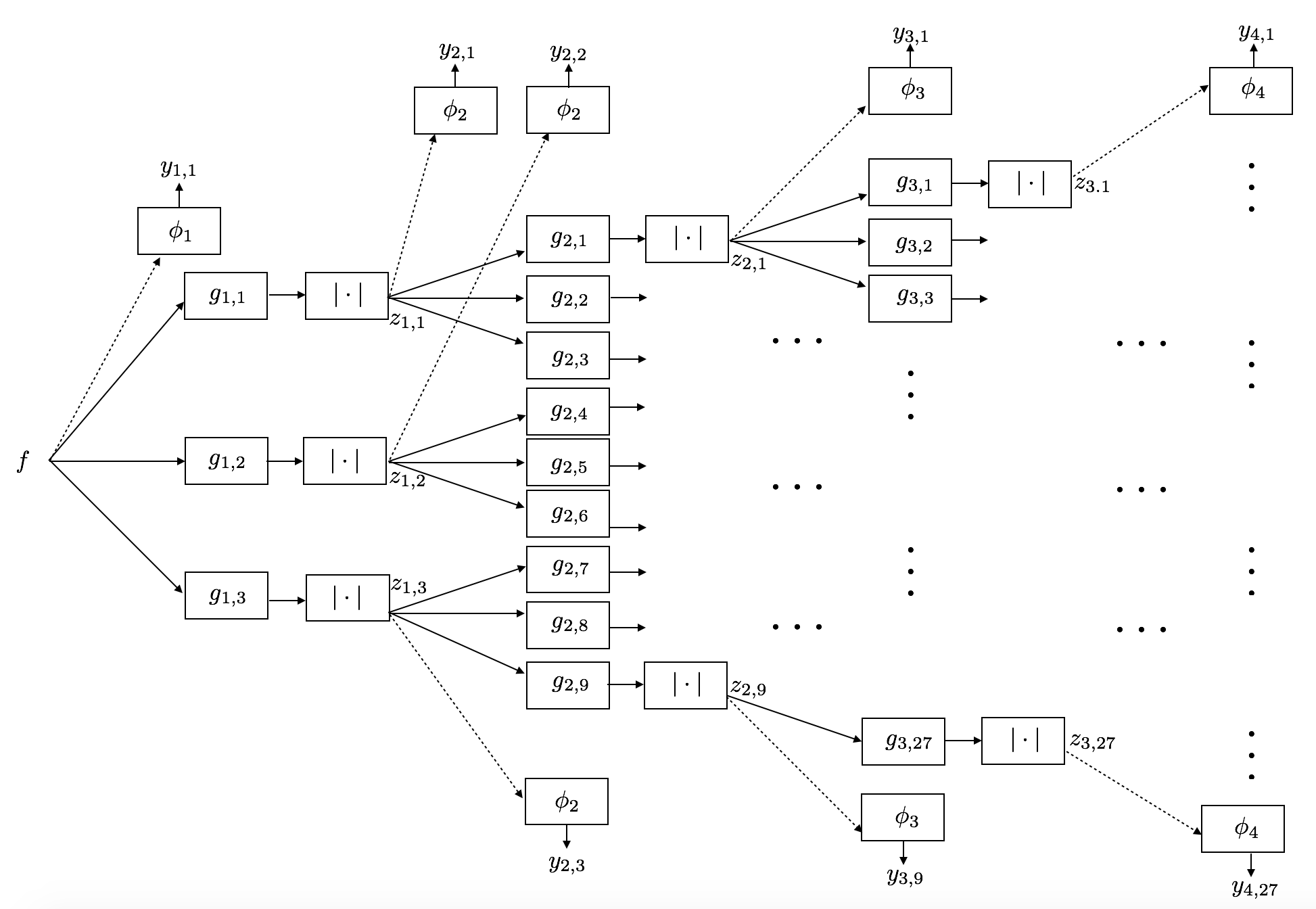}
 \caption{The scattering network in the example}
 \label{fig:struct6}
\end{figure}

The list of sets of filters $G_m^q$ and $G_m$ are
\begin{equation*}
\begin{aligned}
G_1^{\emptyset} = ~&~ \{g_{1,1},g_{1,2},g_{1,3}\} ~; \\
G_2^{(1,1)} = ~&~ \{g_{2,1},g_{2,2},g_{2,3}\} ~; \\
G_2^{(1,2)} = ~&~ \{g_{2,4},g_{2,5},g_{2,6}\} ~; \\
G_2^{(1,3)} = ~&~ \{g_{2,7},g_{2,8},g_{2,9}\} ~; \\
G_3^{((1,1),(2,1))} = ~&~ \{g_{3,1},g_{3,2},g_{3,3}\} ~; \\
& \cdots \\
G_3^{((1,3),(2,9))} = ~&~ \{g_{3,25},g_{3,26},g_{3,27}\} ~; \\
\end{aligned}
\end{equation*}
and
\begin{equation*}
\begin{aligned}
G_1 = ~&~ \{g_{1,1},g_{1,2},g_{1,3}\} ~; \\
G_2 = ~&~ \{g_{2,1}, \cdots, g_{2,9}\} ~; \\
G_3 = ~&~ \{g_{3,1}, \cdots, g_{3,27}\} ~.
\end{aligned}
\end{equation*}

\emph{The first approach.} We use backpropagation and the chain rule. Note that $\psi_{2^j}(t) = 2^j \psi(2^j t)$ and thus $\norm{\psi}_1 = \norm{\psi_{2^j}}_1 = 1$. Therefore $\norm{g_{m,l}}_1 = 1$ for all $m$, $l$. Similarly, $\norm{\phi_j}_1 = 1$ for all $j$. Let $y$'s denote the outputs and $z$'s denote the intermediate values, as marked in Figure \ref{fig:struct6}. Note that each $y$ is associated with a unique path. Consider two inputs $f$ and $\tilde{f}$, and $r \geq 1$. Take a path $q = ((1,l_1),(2,l_2),(3,l_3))$ we have
\begin{equation*}
\begin{aligned}
& \norm{y_{4,l_3}-\tilde{y}_{4,l_3}}_r = \norm{(z_{3,l_3}-\tilde{z}_{3,l_3}) \ast \phi_4}_r \leq \norm{z_{3,l_3}-\tilde{z}_{3,l_3}}_r \norm{\phi_4}_1 = \norm{z_{3,l_3}-\tilde{z}_{3,l_3}}_r ~; \\
& \norm{z_{3,l_3}-\tilde{z}_{3,l_3}}_r = \norm{\abs{z_{2,l_2} \ast g_{3,l_3}}-\abs{\tilde{z}_{2,l_2} \ast g_{3,l_3}}}_r \leq \\
& \qquad \qquad \qquad \norm{z_{2,l_2}-\tilde{z}_{2,l_2}}_r \norm{g_{3,l_3}}_1 = \norm{z_{2,l_2}-\tilde{z}_{2,l_2}}_r ~; \\
& \norm{z_{2,l_2}-\tilde{z}_{2,l_2}}_r = \norm{\abs{z_{1,l_1} \ast g_{2,l_2}}-\abs{\tilde{z}_{1,l_1} \ast g_{2,l_2}}}_r \leq \\
& \qquad \qquad \qquad \norm{z_{1,l_1}-\tilde{z}_{1,l_1}}_r \norm{g_{2,l_2}}_1 = \norm{z_{1,l_1}-\tilde{z}_{1,l_1}}_r ~; \\
& \norm{z_{1,l_1}-\tilde{z}_{1,l_3}}_r = \norm{\abs{f \ast g_{1,l_1}}-\abs{\tilde{f} \ast g_{1,l_1}}}_r \leq \norm{f-\tilde{f}}_r \norm{g_{1,l_1}}_1 = \norm{f-\tilde{f}}_r ~. \\
\end{aligned}
\end{equation*}
and similarly for all output $y_{m,l_m}$'s.
Therefore, we have
\begin{equation*}
||| \Phi(f) - \Phi(\tilde{f}) |||^2 = \sum_{m,l_m} \norm{y_{m,l_m}-\tilde{y}_{m,l_m}}_2^2 \leq 40 \norm{f-\tilde{f}}_2^2 ~.
\end{equation*}

\emph{The second approach.} According to the result from multi-resolution analysis, we have $\abs{\hat{\phi}_{2^{-J}}(\omega)} + \sum_{j=-2}^0 \abs{\hat{\psi}_{2^j}(\omega)}^2 \leq 1$ (plotted in Figure \ref{fig:result2}), we have $\tilde{B}_1 = \tilde{B}_2 = \tilde{B}_3 = \tilde{B}_4 = 1$. Indeed, we can compute that
\begin{equation*}
\begin{aligned}
\abs{\hat{\phi}_{2^{-J}}(\omega)} + \sum_{j=-2}^0 \abs{\hat{\psi}_{2^j}(\omega)}^2 = ~&~ \sinc^2(8\omega) + \sinc^2(\omega/2) \sin^2(\pi \omega/2) + \\
~&~ \sinc^2(\omega) \sin^2(\pi \omega) + \sinc^2(2\omega) \sin^2(2\pi \omega) ~.
\end{aligned}
\end{equation*}
Thus in this way, according to our discussion in Section 2, we have $|||\Phi(f)-\Phi(\tilde{f})|||^2 \leq \norm{f-\tilde{f}}_2^2$.

\begin{figure}[ht!]
 \centering
 \includegraphics[width=0.67\textwidth]{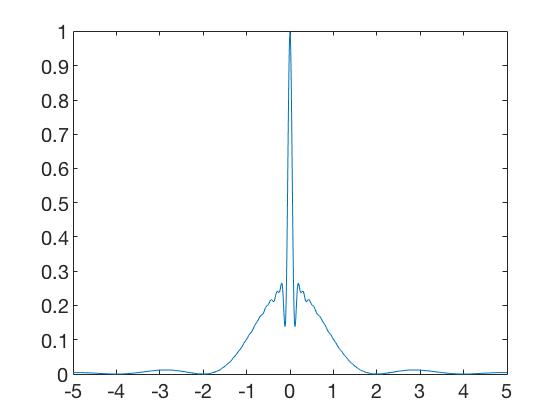}
 \caption{Plot of $\abs{\hat{\phi}_{2^{-J}}(\omega)} + \sum_{j=-2}^0 \abs{\hat{\psi}_{2^j}(\omega)}^2$}
 \label{fig:result2}
\end{figure}

\emph{The third approach.} A lower bound is derived by considering only the output $y_{1,1}$ from the input layer. Obviously
\begin{equation*}
||| \Phi(f) - \Phi(\tilde{f}) |||^2 \geq \norm{(f-\tilde{f}) \ast \phi_1}_1^2 ~.
\end{equation*}
Thus
\begin{equation*}
\sup_{f \neq \tilde{f}} \frac{|||\Phi(f)-\Phi(\tilde{f})|||^2}{\norm{f-\tilde{f}}_2^2} \geq \sup_{f \neq \tilde{f}} \frac{\norm{(f-\tilde{f}) \ast \phi_1}_1^2}{\norm{f-\tilde{f}}_2^2} = \norm{\hat{\phi}_1}_{\infty}^2 = 1 ~.
\end{equation*}
Therefore, $1$ is the exact Lipschitz bound (and Lipschitz constant) in our example.

\subsection{A general 3-layer network}
We now give an example of how to compute the Lipschitz constant as in Figure \ref{fig:struct5_1}. In Figure \ref{fig:struct5_1} $f$ is the input, $y$'s are the outputs and $z$'s are the intermediate values within the network. We assume that $p \geq 2$.

\begin{figure}[ht!]
 \centering
 \includegraphics[width=0.75\textwidth]{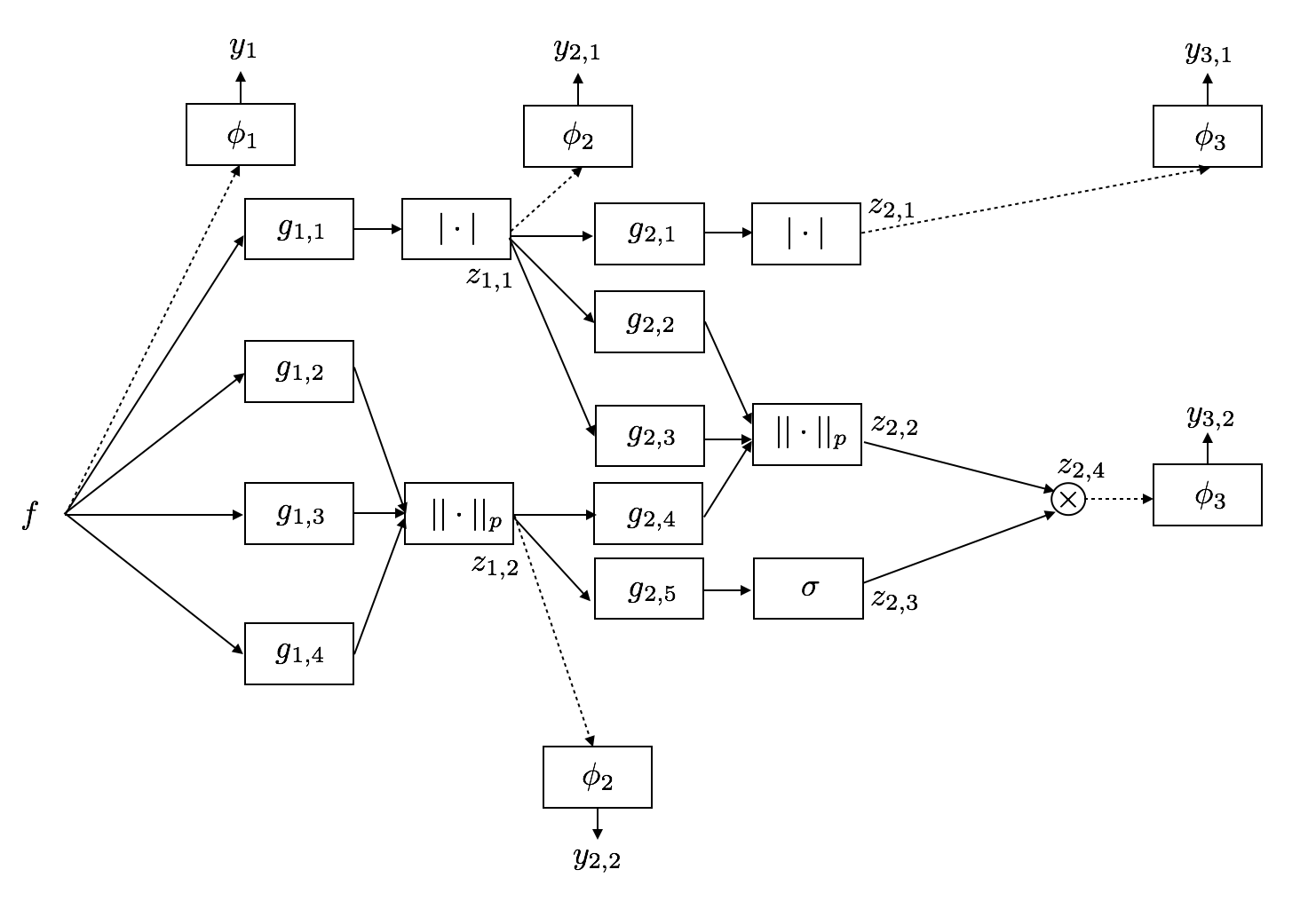}
 \caption{An example for computing the Lipschitz constant}
 \label{fig:struct5_1}
\end{figure}


Again we use three approaches to estimate the Lipschitz constant.

\emph{The first approach.}  In this approach we do not analyze the network by layers, but directly look at the outputs. We make use of the following rules: (1) backpropagation using the product rule and the chain rule; (2) each $p$-norm block is a multi-input-single-output nonlinear system with Lipschitz constant $1$ for each channel.

Take two signals $f$ and $\tilde{f}$. We use $\tilde{y}$'s and $\tilde{z}$'s to denote the outputs and intermediate values corresponding to $\tilde{f}$. Starting from the leftmost channels, we have for the first layer that
\begin{equation*}
\abs{y_1-\tilde{y}_1} = \abs{(f-\tilde{f}) \ast \phi_1} ~,
\end{equation*}
and thus for any $1 \leq r \leq \infty$,
\begin{equation}\label{eq:bp1}
\norm{y_1-\tilde{y}_1}_r \leq \norm{f-\tilde{f}}_r \norm{\phi_1}_1 ~.
\end{equation}

For the second layer we have
\begin{equation*}
\abs{y_{2,1}-\tilde{y}_{2,1}} = \abs{(z_{1,1}-\tilde{z}_{1,1}) \ast \phi_{2,2}} ~,
\end{equation*}
and thus
\begin{equation*}
\norm{y_{2,1}-\tilde{y}_{2,1}}_r \leq \norm{z_{1,1}-\tilde{z}_{1,1}}_r \norm{\phi_2}_1 ~.
\end{equation*}
With
\begin{equation*}
\norm{z_{1,1}-\tilde{z}_{1,1}}_r \leq \norm{f-\tilde{f}}_{r} \norm{g_{1,1}}_1 ~,
\end{equation*}
we have
\begin{equation}\label{eq:bp2_1}
\norm{y_{2,1}-\tilde{y}_{2,1}}_r \leq \norm{f-\tilde{f}}_r \norm{g_{1,1}}_1 \norm{\phi_2}_1 ~.
\end{equation}
Similarly,
\begin{equation*}
\norm{y_{2,2}-\tilde{y}_{2,2}}_r \leq \norm{z_{1,2}-\tilde{z}_{1,2}}_r \norm{\phi_2}_1 ~,
\end{equation*}
and with
\begin{equation*}
\begin{aligned}
\abs{z_{1,2}-\tilde{z}_{1,2}} ~=~ & \Big\vert \left(\abs{f \ast g_{1,2}}^p + \abs{f \ast g_{1,3}}^p + \abs{f \ast g_{1,4}}^p\right)^{1/p} - \\
& \qquad \left(\abs{\tilde{f} \ast g_{1,2}}^p + \abs{\tilde{f} \ast g_{1,3}}^p + \abs{\tilde{f} \ast g_{1,4}}^p\right)^{1/p} \Big\vert \\
~\leq~ & \left(\abs{(f-\tilde{f}) \ast g_{1,2}}^p + \abs{(f-\tilde{f}) \ast g_{1,3}}^p + \abs{(f-\tilde{f}) \ast g_{1,4}}^p\right)^{1/p} \\
~\leq~ & \abs{(f-\tilde{f}) \ast g_{1,2}} + \abs{(f-\tilde{f}) \ast g_{1,3}} + \abs{(f-\tilde{f}) \ast g_{1,4}}
\end{aligned}
\end{equation*}
we have
\begin{equation*}
\norm{z_{1,2}-\tilde{z}_{1,2}}_r \leq \norm{f-\tilde{f}} (\norm{g_{1,2}}_1 + \norm{g_{1,3}}_1 + \norm{g_{1,4}}_1) ~.
\end{equation*}
Therefore
\begin{equation}\label{eq:bp2_2}
\norm{y_{2,2}-\tilde{y}_{2,2}}_r \leq \norm{f-\tilde{f}} (\norm{g_{1,2}}_1 + \norm{g_{1,3}}_1 + \norm{g_{1,4}}_1) \norm{\phi_2}_1 ~.
\end{equation}

For the third layer we have
\begin{equation*}
\norm{y_{3,1}-\tilde{y}_{3,1}}_r \leq \norm{z_{2,1}-\tilde{z}_{2,1}}_r \norm{\phi_3}_1 ~.
\end{equation*}
With
\begin{equation*}
\norm{z_{2,1}-\tilde{z}_{2,1}}_r \leq \norm{z_{1,1}-\tilde{z}_{1,1}}_r \norm{g_{2,1}}_1 ~,
\end{equation*}
we have
\begin{equation}\label{eq:bp3_1}
\norm{y_{3,1}-\tilde{y}_{3,1}}_r \leq \norm{f-\tilde{f}}_r \norm{g_{1,1}}_1 \norm{g_{2,1}}_1 \norm{\phi_3}_1 ~.
\end{equation}
Also,
\begin{equation*}
\begin{aligned}
\abs{z_{2,2}-\tilde{z}_{2,2}} ~=~ & \Big\vert \left(\abs{z_{1,1} \ast g_{2,2}}^p + \abs{z_{1,1} \ast g_{2,3}}^p + \abs{z_{1,2} \ast g_{2,4}}^p\right)^{1/p} - \\
& \qquad \left(\abs{\tilde{z}_{1,1} \ast g_{2,2}}^p + \abs{\tilde{z}_{1,1} \ast g_{2,3}}^p + \abs{\tilde{z}_{1,2} \ast g_{2,4}}^p\right)^{1/p} \Big\vert \\
~\leq~ & ( \abs{(z_{1,1}-\tilde{z}_{1,1}) \ast g_{2,2}}^p + \abs{(z_{1,1}-\tilde{z}_{1,1}) \ast g_{2,3}}^p + \\
& \qquad \abs{(z_{1,2}-\tilde{z}_{1,2}) \ast g_{2,4}}^p )^{1/p} \\
~\leq~ & \abs{(z_{1,1}-\tilde{z}_{1,1}) \ast g_{2,2}} + \abs{(z_{1,1}-\tilde{z}_{1,1}) \ast g_{2,3}} + \abs{(z_{1,2}-\tilde{z}_{1,2}) \ast g_{2,4}} ~,
\end{aligned}
\end{equation*}
which gives
\begin{equation*}
\norm{z_{2,2}-\tilde{z}_{2,2}}_r \leq \norm{z_{1,1}-\tilde{z}_{1,1}}_r (\norm{g_{2,2}}_1+\norm{g_{2,3}}_1) + \norm{z_{1,2}-\tilde{z}_{1,2}}_r \norm{g_{2,4}}_1 ~.
\end{equation*}
A more obvious relation is 
\begin{equation*}
\norm{z_{2,3}-\tilde{z}_{2,3}}_r \leq \norm{z_{1,2}-\tilde{z}_{1,2}}_r \norm{g_{2,5}}_1 ~.
\end{equation*}
Under conditions in Theorem \ref{prop:lipMult}, we have
\begin{equation*}
\begin{aligned}
\norm{z_{2,4}-\tilde{z}_{2,4}}_r ~=~ & \norm{z_{2,3}z_{2,2}-\tilde{z}_{2,3}\tilde{z}_{2,2}}_r \\
~=~ & \norm{z_{2,3}z_{2,2}-\tilde{z}_{2,3}z_{2,2}+\tilde{z}_{2,3}z_{2,2}-\tilde{z}_{2,3}\tilde{z}_{2,2}}_r \\
~\leq~ & \norm{z_{2,3}-\tilde{z}_{2,3}}_r \norm{z_{2,2}}_{\infty} + \norm{\tilde{z}_{2,3}}_{\infty} \norm{z_{2,2}-\tilde{z}_{2,2}}_r \\
~\leq~ & \norm{z_{2,2}-\tilde{z}_{2,2}}_r+\norm{z_{2,3}-\tilde{z}_{2,3}}_r ~,
\end{aligned}
\end{equation*}
and consequently we have
\begin{equation}\label{eq:bp3_2}
\begin{aligned}
\norm{y_{3,2}-\tilde{y}_{3,2}}_r ~\leq~ & \norm{z_{2,4}-\tilde{z}_{2,4}}_r \norm{\phi_3}_1 \\
~\leq~ & (\norm{z_{2,2}-\tilde{z}_{2,2}}_r+\norm{z_{2,3}-\tilde{z}_{2,3}}_r) \norm{\phi_3}_1 \\
~\leq~ & \norm{z_{1,1}-\tilde{z}_{1,1}}_r (\norm{g_{2,2}}_1+\norm{g_{2,3}}_1)\norm{\phi_3}_1 + \\
& \qquad \norm{z_{1,2}-\tilde{z}_{1,2}}_r (\norm{g_{2,4}}_1+\norm{g_{2,5}}_1) \norm{\phi_3}_1 \\
~\leq~ & \norm{f-\tilde{f}}_r \Big( \norm{g_{1,1}}_1 (\norm{g_{2,2}}_1+\norm{g_{2,3}}_1)+ \\
& \qquad (\norm{g_{1,2}}_1+\norm{g_{1,3}}_1+\norm{g_{1,4}}_1)(\norm{g_{2,4}}_1+\norm{g_{2,5}}_1) \Big) \norm{\phi_3}_1 ~.
\end{aligned}
\end{equation}

Collecting (\ref{eq:bp1})-(\ref{eq:bp3_2}) we have
\begin{equation*}
\begin{aligned}
\sum_{m,l} \norm{y_{m,l}-\tilde{y}_{m,l}}_r ~\leq~ & \norm{f-\tilde{f}}_r \bigg( \norm{\phi_1}_1 + \norm{g_{1,1}}_1 \norm{\phi_2}_1 + \\
& \qquad (\norm{g_{1,2}}_1+\norm{g_{1,3}}_1+\norm{g_{1,4}}_1) \norm{\phi_2}_1 + \\
& \qquad \norm{g_{1,1}}_1 \norm{g_{2,1}}_1 \norm{\phi_3}_1 + \Big( \norm{g_{1,1}}_1 (\norm{g_{2,2}}_1+\norm{g_{2,3}}_1)+ \\
& \qquad (\norm{g_{1,2}}_1+\norm{g_{1,3}}_1+\norm{g_{1,4}}_1)(\norm{g_{2,4}}_1+\norm{g_{2,5}}_1) \Big) \norm{\phi_3}_1 \bigg) \\
~=~ & \norm{f-\tilde{f}}_r \bigg( \norm{\phi_1}_1 + (\norm{g_{1,1}}_1 +\norm{g_{1,2}}_1+\norm{g_{1,3}}_1+\norm{g_{1,4}}_1) \\
& \qquad \norm{\phi_2}_1 + \Big( \norm{g_{1,1}}_1 (\norm{g_{2,1}}_1+\norm{g_{2,2}}_1+\norm{g_{2,3}}_1)+ \\
& \qquad (\norm{g_{1,2}}_1+\norm{g_{1,3}}_1+\norm{g_{1,4}}_1)(\norm{g_{2,4}}_1+\norm{g_{2,5}}_1) \Big) \norm{\phi_3}_1 \bigg) ~.
\end{aligned}
\end{equation*}
On the other hand we also have
\begin{equation}\label{ex:bp}
\begin{aligned}
|||\Phi(f)-\Phi(\tilde{f})|||^2 ~=~ & \sum_{m,l} \norm{y_{m,l}-\tilde{y}_{m,l}}_2^2 \\
~\leq~ & \norm{f-\tilde{f}}_2^2 \bigg( \norm{\phi_1}_1^2 + \norm{g_{1,1}}_1^2 \norm{\phi_2}_1^2 + \\
& \qquad (\norm{g_{1,2}}_1+\norm{g_{1,3}}_1+\norm{g_{1,4}}_1)^2 \norm{\phi_2}_1^2 + \\
& \qquad \norm{g_{1,1}}_1^2 \norm{g_{2,1}}_1^2 \norm{\phi_3}_1^2 + \Big( \norm{g_{1,1}}_1 (\norm{g_{2,2}}_1+\norm{g_{2,3}}_1)+ \\
& \qquad (\norm{g_{1,2}}_1+\norm{g_{1,3}}_1+\norm{g_{1,4}}_1)(\norm{g_{2,4}}_1+\norm{g_{2,5}}_1) \Big)^2 \norm{\phi_3}_1^2 \bigg) ~.
\end{aligned}
\end{equation}

\emph{The second approach.} To apply our formula, we first add $\delta$'s and form a network as in Figure \ref{fig:struct5_2}. We have a three-layer network and as we have discussed, we can compute, since $p \geq 2$, that
\begin{equation*}
\begin{aligned}
\tilde{B}_1 &~= \norm{\abs{\hat{g}_{1,1}}^2+\abs{\hat{g}_{1,2}}^2+\abs{\hat{g}_{1,3}}^2+\abs{\hat{g}_{1,4}}^2+\abs{\hat{\phi}_1}^2}_{\infty}; \\
\tilde{B}_2 &~= \max \left\{ 1, \norm{\abs{\hat{g}_{2,1}}^2+\abs{\hat{g}_{2,2}}^2+\abs{\hat{g}_{2,3}}^2+\abs{\hat{\phi}_2}^2}_{\infty} , \norm{\abs{\hat{g}_{2,4}}^2+\abs{\hat{g}_{2,5}}^2+\abs{\hat{\phi}_2}^2}_{\infty} \right\}; \\
\tilde{B}_3 &~= \max \left\{ 2 , \norm{\hat{\phi}_3}_{\infty}^2 \right\}; \\
\tilde{B}_4 &~= \max \left\{ 1, \norm{\hat{\phi}_3}_{\infty}^2 \right\}.
\end{aligned}
\end{equation*}
Then the Lipschitz constant is given by $(\tilde{B}_1 \tilde{B}_2 \tilde{B}_3 \tilde{B}_4)^{1/2}$, that is, 
\begin{equation}\label{ex:formula}
|||\Phi(f)-\Phi(\tilde{f})|||^2 \leq (\tilde{B}_1 \tilde{B}_2 \tilde{B}_3 \tilde{B}_4) \norm{f-\tilde{f}}_2^2 ~.
\end{equation}

\begin{figure}[ht!]
 \centering
 \includegraphics[width=0.75\textwidth]{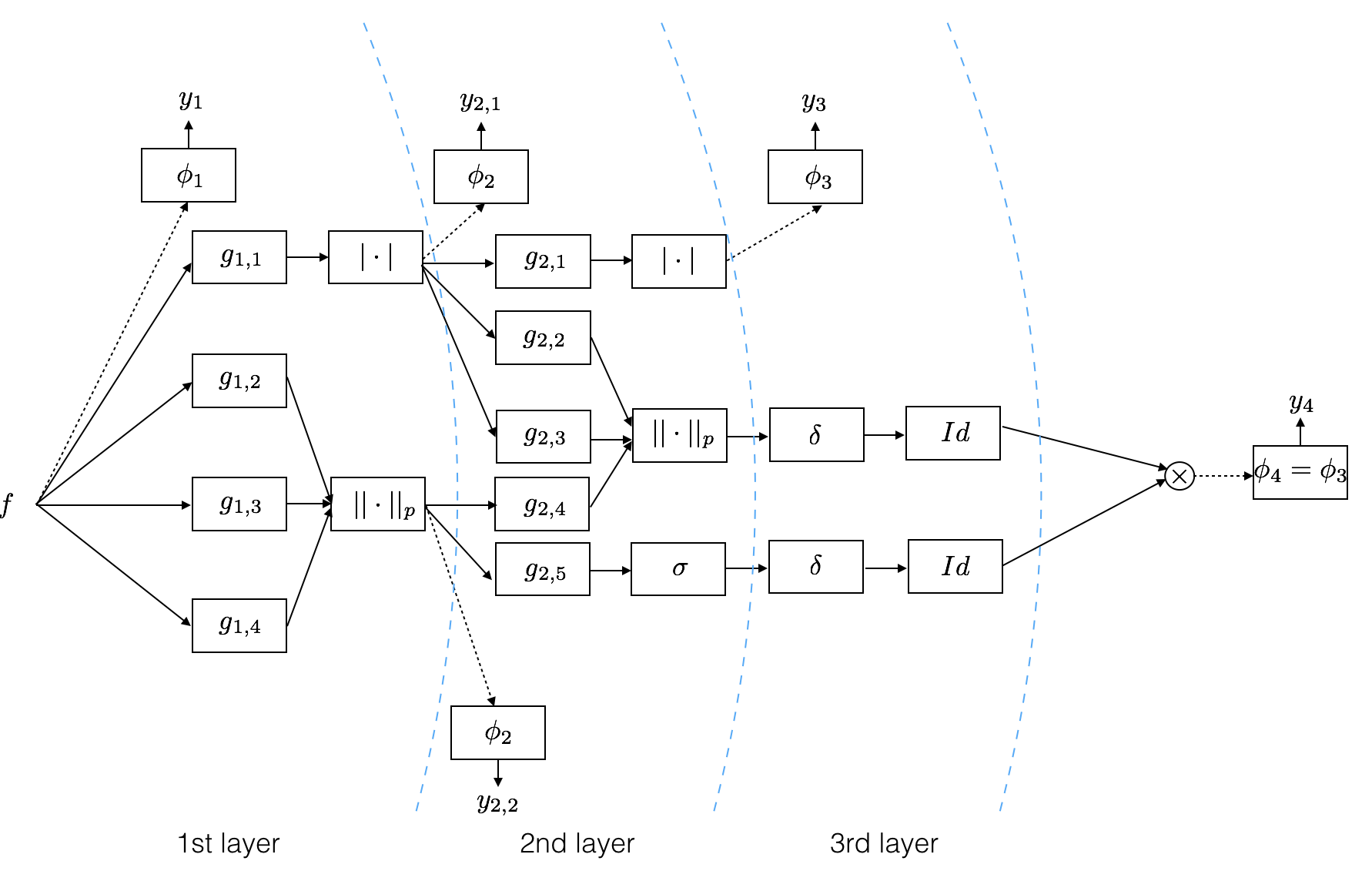}
 \caption{An equivalent reformulation of the same network as in Figure \ref{fig:struct5_1}}
 \label{fig:struct5_2}
\end{figure}

\emph{The third approach.} In general (\ref{ex:formula}) provides a more optimal bound than (\ref{ex:bp}) because the latter does not consider the intrinsic relations of the filters that are grouped together in the same layer. The actual Lipschitz bound can depend on the actual design of filters, not only on the Bessel bounds. We do a numerical experiment in which the Fourier transform of the filters in the same layer are the (smoothed) characteristic functions supported disjointly in the frequency domain. 

Define $F(\omega) = \exp(4\omega^2/(4\omega^2-1)) \cdot \chi_{(-1/2,0)}(\omega)$, and $G(\omega) = F(-\omega)$. The fourier transform of the filters are defined to be
\begin{equation*}
\begin{aligned}
\hat{\phi}_1(\omega) ~=~ & F(\omega+1)+\chi_{(-1,1)}(\omega)+G(\omega-1) \\
\hat{g}_{1,1}(\omega) ~=~ & F(\omega+3)+\chi_{(-3,-2)}(\omega)+G(\omega+2)+F(\omega-2)+\chi_{(2,3)}(\omega)+G(\omega-3) \\
\hat{g}_{1,2}(\omega) ~=~ & F(\omega+5)+\chi_{(-5,-4)}(\omega)+G(\omega+4)+F(\omega-4)+\chi_{(4,5)}(\omega)+G(\omega-5) \\
\hat{g}_{1,3}(\omega) ~=~ & F(\omega+7)+\chi_{(-7,-6)}(\omega)+G(\omega+6)+F(\omega-6)+\chi_{(6,7)}(\omega)+G(\omega-7) \\
\hat{g}_{1,4}(\omega) ~=~ & F(\omega+9)+\chi_{(-9,-8)}(\omega)+G(\omega+8)+F(\omega-8)+\chi_{(8,9)}(\omega)+G(\omega-9) \\
\hat{\phi}_2(\omega) ~=~ & F(\omega+2)+\chi_{(-2,2)}(\omega)+G(\omega-2) \\
\hat{g}_{2,1}(\omega) ~=~ & F(\omega+4)+\chi_{(-4,-3)}(\omega)+G(\omega+3)+F(\omega-3)+\chi_{(3,4)}(\omega)+G(\omega-4) \\
\hat{g}_{2,2}(\omega) ~=~ & F(\omega+6)+\chi_{(-6,-5)}(\omega)+G(\omega+5)+F(\omega-5)+\chi_{(5,6)}(\omega)+G(\omega-6) \\
\hat{g}_{2,3}(\omega) ~=~ & F(\omega+8)+\chi_{(-8,-7)}(\omega)+G(\omega+7)+F(\omega-7)+\chi_{(7,8)}(\omega)+G(\omega-8) \\
\hat{g}_{2,4}(\omega) ~=~ & F(\omega+5)+\chi_{(-5,-3)}(\omega)+G(\omega+3)+F(\omega-3)+\chi_{(3,5)}(\omega)+G(\omega-5) \\
\hat{g}_{2,5}(\omega) ~=~ & F(\omega+8)+\chi_{(-8,-6)}(\omega)+G(\omega+6)+F(\omega-6)+\chi_{(6,8)}(\omega)+G(\omega-8) \\
\hat{\phi}_3(\omega) ~=~ & F(\omega+9)+\chi_{(-9,9)}(\omega)+G(\omega-9) \\
\end{aligned}
\end{equation*}
Then each function is in $C_{C}^{\infty}(\hat{\mathbb{R}})$.

We numerically compute the $L^1$ norms of the inverse transform of  the above functions using IFFT and numerical integration with stepsize $0.025$: $\norm{\phi_1}_1 = 1.8265$, $\norm{g_{1,1}}_1 = 2.0781$, $\norm{g_{1,2}}_1 = 2.0808$, $\norm{g_{1,3}}_1 = 2.0518$, $\norm{g_{1,4}}_1 = 2.0720$, $\norm{\phi_2}_1 = 2.0572$, $\norm{g_{2,1}}_1 = 2.0784$, $\norm{g_{2,2}}_1 = 2.0734$, $\norm{g_{2,3}}_1 = 2.0889$, $\norm{g_{2,4}}_1 = 2.2390$, $\norm{g_{2,5}}_1 = 2.3175$, $\norm{\phi_3}_1 = 2.6378$. Then the constant on the right-hand side of Inequality (\ref{ex:bp}) is $966.26$, and by taking the square root we get the Lipschitz bound computed using the first approach is equal to $\Gamma_1 = 98.3$.

It is no effort to conclude that in the second approach, $\tilde{B}_1 = \tilde{B}_2 = \tilde{B}_4 = 1$ and $\tilde{B}_3 = 2$. Therefore the Lipschitz bound computed using the second approach is $\Gamma_2 = \sqrt{2}$. Note that in this example the conditions in Lemma \ref{lem:nonexpansiveMult} is satisfied.

The experiment suggests that the Lipschitz bound associated with our setting of filters is $\Gamma_3 = 1.1937$. We numerically compute the output of the network and record the largest ratio $|||\Phi(f)-\Phi(\tilde{f})|||/||f-\tilde{f}||_2$ over one million iterations. Numerically, we consider the range $[-20,20]$ for both the time domain and the frequency domain and take stepsize to be $0.025$. For each iteration we generate two randomly signals on $[-20,20]$ with stepsize $1$ and then upsample to the same scale with stepsize $0.025$.

We conclude that the na\"ive first approach may lead to a much larger Lipschitz bound for analysis, and the second approach gives a more reasonable estimation.

\section*{Acknowledgments}
The first author was partially supported by NSF Grant DMS-1413249 and ARO Grant W911NF-16-1-0008. The third author was partially supported by NSF Grant DMS-1413249.

\nocite{*}
\bibliography{ams2016ref}
\bibliographystyle{amsplain}

\end{document}